\newcommand{\gauss}[3]{\mathcal{N}\left( #1 \middle| #2, #3 \right)}
\newcommand{\KMinv}{ K_{MM}^{-1}}
\newcommand{\KMinvDelta}{\left( K^\Delta_{MM}\right)^{-1}}
\newcommand{\KMdelinv}{ \left(K_{MM}^\Delta \right)^{-1}}
\newcommand{\KdelT}[1]{ \left(K_{#1}^\Delta \right)^{\top}}
\newcommand{\mut}[1]{ \widetilde{\mu}_{#1}}
\newcommand{\St}[2]{ \widetilde{S}_{#1,#2}}
\newcommand{\invSt}[1]{ \widetilde{S}^{-1}_{#1,#1}}
\newcommand{\beginsupplement}{%
        \setcounter{table}{0}
        \renewcommand{\thetable}{S\arabic{table}}%
        \setcounter{figure}{0}
        \renewcommand{\thefigure}{S\arabic{figure}}%
     }
\newcommand{\doubletilde}[1]{\tilde{\tilde{#1}}}
\newcommand{\Stt}[2]{ \doubletilde{S}_{#1,#2}}
\newcommand{\doublehat}[1]{\hat{\hat{#1}}}
\title[Traversing Time with Multi-Resolution Gaussian Process State-Space Models]{Traversing Time with Multi-Resolution Gaussian Process State-Space Models}
\author{\Name{Krista Longi}  \Email{krista.longi@helsinki.fi} \\
\addr University of Helsinki\\
\Name{Jakob Lindinger} \Email{jakob.lindinger@de.bosch.com} \\
\addr Bosch Center for Artificial Intelligence\\
\Name{Olaf Duennbier} \Email{Olaf.Duennbier@de.bosch.com} \\
\addr Robert Bosch GmbH\\
\Name{Melih Kandemir} \Email{kandemir@imada.sdu.dk} \\
\addr University of Southern Denmark\\
\Name{Arto Klami} \Email{arto.klami@helsinki.fi} \\
\addr University of Helsinki\\
\Name{Barbara Rakitsch} \Email{barbara.rakitsch@de.bosch.com} \\
\addr Bosch Center for Artificial Intelligence}
\begin{document}

\maketitle

\begin{abstract}
Gaussian Process state-space models capture complex temporal dependencies in a principled manner by placing a Gaussian Process prior on the transition function.
These models have a natural interpretation as discretized stochastic differential equations, but inference for long sequences with fast and slow transitions is difficult. Fast transitions need tight discretizations whereas slow transitions require backpropagating the gradients over long subtrajectories. 
We propose a novel Gaussian process state-space architecture composed of multiple components, each trained on a different resolution, to model effects on different timescales. The combined model allows traversing time on adaptive scales, providing efficient inference for arbitrarily long sequences with complex dynamics. 
We benchmark our novel method on semi-synthetic data and on an engine modeling task. 
In both experiments, our approach compares favorably against its state-of-the-art alternatives that operate on a single time-scale only.
\end{abstract}

\begin{keywords}%
  State-Space Model, Gaussian Process%
\end{keywords}


\section{Introduction}
\label{sec:intro}

Time-series modeling lies at the heart of many tasks: forecasting the daily number of
new cases in epidemiology~\citep{zimmer2020},  optimizing stock portfolios~\citep{heaton2017deep} or predicting the emissions of a car engine~\citep{yu2020deep}. 
In many cases, we do not know the underlying physical model but instead need to learn the dynamics from data, ideally in a non-parametric manner to support arbitrary dynamics. Irrespective of the total amount of data, many interesting phenomena (e.g. rapid transitions in dynamics) manifest only in a small subset of the samples, calling for probabilistic
forecasting techniques.

Gaussian Process state-space models (GPSSMs) hold the promise to model non-linear, unknown dynamics in a probabilistic manner by placing a Gaussian Process (GP) prior on the transition function~\citep{wang2005gaussian, frigola2015bayesian}.
While inference has been proven to be challenging for this model family, there has been a lot of progress in the past years and 
recent approaches vastly improved the scalability~\citep{eleftheriadis2017identification, doerr2018probabilistic}.

For long trajectories, methods updating the parameters using the complete sequence converge poorly due to the vanishing and exploding gradient problem~\citep{pascanu2013difficulty}.
While specialized architectures help circumventing the problem in the case of recurrent neural networks~\citep{hochreiter1997lstm, chung2014empirical}, it is not clear how one can apply these concepts to GP models. Furthermore, the problem of large runtime and memory footprints for training persists, as the gradients need to be backpropagated through the complete sequence.
A natural solution is to divide the trajectory into mini-batches which
reduces training time significantly, but also lowers the flexibility of the model: long-term effects that evolve slower than the size of one mini-batch can no longer be inferred~\citep{williams1995gradient}.

To address the problem of modeling long-term dependencies while retaining the computational advantage of mini-batching, we propose a novel GPSSM architecture with $L$ additive components.
The resulting posterior is intractable, and we apply variational inference to find an efficient and structured approximation~\citep{blei2017variational}. 
To capture effects on different time scales, our training scheme cycles through the components, whereby each component $l \in \{1,...,L\}$, 
is trained on a different resolution. 
For training the low-resolution components, 
we downsample the observations of the sequence, allowing us to pack a longer history in a mini-batch of fixed size (see Figure~\ref{fig:resolution}).

We further show that our algorithm is grounded in a coherent statistical framework by interpreting the GP transition model as a stochastic differential equation (SDE) similar to~\citet{hegde2018deep}. 
This relationship enables us to train our components on adaptive scales to capture effects on multiple time scales.
From a numerical perspective, our method decomposes the dynamics of the data according to its required step sizes. 
Long-term effects are learned by low-resolution components corresponding to large step sizes, short-term effects by high-resolution components corresponding to small step sizes.

We validate our new algorithm experimentally and show that it works well in practice on semi-synthetic data and on
a challenging engine modeling task.
Furthermore, we demonstrate that our
algorithm outperforms its competitors by a large margin in cases where the dataset consists of fast and slow dynamics.
For the engine modeling task, we introduce a new dataset to the community that contains the raw emissions of a gasoline car engine and has over 500,000 measurements. The dataset is available at \url{https://github.com/boschresearch/Bosch-Engine-Datasets}.

\section{Background on GPSSMs and SDEs}
\label{sec:background}
\paragraph{Gaussian Processes in a Nutshell} 
The GP prior, $f(x) \sim GP(0, k(x,x'))$, defines a distribution over functions, $f: \mathbb{R}^{D_x} \rightarrow \mathbb{R}$, and is fully specified 
by the kernel $k: \mathbb{R}^{D_x} \times \mathbb{R}^{D_x} \rightarrow \mathbb{R}$. Given a set of arbitrary inputs, $x_{M}=\{x_m\}_{m=1}^M$, their function values, $f_{M} = \{f(x_m)\}_{m=1}^M$, follow a Gaussian distribution 
$
p(f_{M}) = \mathcal{N}(f_{M} \vert 0 , \text{K}_{MM})
$
where   $\text{K}_{MM}=\{k(x_m, x_{m'})\}_{m=1, m'=1}^M$. 

For a new set of input points, $x_N = \{x_n\}_{n=1}^N$,
the predictive distribution over the corresponding function values, $f_N = \{f(x_n) \}_{n=1}^N$, can then be obtained
by conditioning the joint distribution on $f_M$,
leading to $p(f_N \vert f_M) = \mathcal{N}(f_N \vert \mu(x_N), \Sigma(x_N))$ with
\begin{eqnarray}
\mu (x_N) &=& 
\text{K}_{NM} \text{K}_{MM}^{-1} f_M , \label{eq:gpmean}\\
\Sigma (x_N) &=&
\text{K}_{NN} - \text{K}_{NM} \text{K}_{MM}^{-1} \text{K}_{NM}^{\top},\label{eq:gpcov}
\end{eqnarray}
where the cross-covariances $K_{NM}$ are defined similarly as $\text{K}_{MM}$, i.e.
$\text{K}_{NM} = \{k(x_n, x_m)\}_{n,m=1}^{N,M}$.
For a more detailed introduction, we refer the interested reader to~\citet{rasmussen2006gaussian}.

\paragraph{Gaussian Process State-Space Models}
\label{sec:gpssm}
We are given a dataset $y_T =\{y_t\}_{t=1}^T$ over $T$ time points. Each time point $t$ is characterized by the outputs $y_t \in \mathbb{R}^{D_y}$. 
State-space models (see e.g.~\citet{sarkka2013bayesian}) offer a general way to describe time-series data by introducing a latent state, $x_t \in \mathbb{R}^{D_x}$, that captures the compressed history of the system, for each time point $t \in \{1, \ldots, T\}$. 
Assuming the process and observational noise to be i.i.d.\ Gaussian distributed, the model can be written down as follows:
\begin{eqnarray*}
x_{t+1} \vert x_{t}\sim  \mathcal{N}(x_{t+1} \vert  x_{t} +  f(x_{t}),  \text{Q}), \;\;\;\;\;\;\;\;\;\;
y_t \vert x_t  \sim \mathcal{N}(y_t \vert g(x_t), \Omega), 
\end{eqnarray*}
where $f:\mathbb{R}^{D_x} \rightarrow \mathbb{R}^{D_x}$ models the change of the latent state in time and $g: \mathbb{R}^{D_x} \rightarrow \mathbb{R}^{D_y}$ maps the latent state to the observational space. 
The covariance $\text{Q} \in \mathbb{R}^{{D_x} \times {D_x}}$ describes the process noise, and
 $\Omega \in \mathbb{R}^{{D_y} \times {D_y}}$ the observational noise. 
Following the literature~\citep{wang2005gaussian, deisenroth2011pilco}, we assume that the update in the latent state can be modeled under a GP prior, i.e. $f(x) \sim GP(0, k(x, x'))$.\footnote{To be more precise, each latent dimension $d\in \{1,\ldots, D_x\}$ follows an independent GP prior. We suppressed the dependency on the latent dimension $d$ for the sake of better readability in our notation.}
The model generalizes easily to problems with exogenous inputs that we left out in favor of an uncluttered notation.

Finally, we chose a linear model $g(x_t) = Cx_t$ with output matrix $C \in \mathbb{R}^{D_y \times D_x}$ as emission function.
This is a widely adopted design choice, since the linear emission model reduces non-identifiabilities of the solution~\citep{frigola2015bayesian}.
Our approach generalizes to non-linear models as well, which might for instance be important in cases in which prior knowledge supports the use of more expressive emission models.

\paragraph{Sparse Parametric Gaussian Process State-Space Models}
\label{sec:fitc}
Sparse GPs augment the model by a set of inducing points $(x_M, f_M)$ that can be exploited during inference to summarize the training data in an efficient way.
~\citet{snelson2005sparse} introduced the so-called FITC (fully independent training conditional) approximation on the augmented joint density $p(f_M, f_N)$ by assuming independence between the function values, $f_N$, conditioned on the set of inducing points, $f_M$, i.e.
$p(f_N, f_M) \approx \prod_n  p(f_n \vert f_M) p(f_M)$ with $p(f_n \vert f_M) = \mathcal{N}(f_n \vert \mu(x_n), \Sigma(x_n))$.
Recently, this and similar formulations have regained interest in the community, since they simplify inference and yield good empirical performance~\citep{jankowiak2020parametric, rossi2020rethinking}.
We follow this line of work by assuming the same conditional factorization, 
\begin{align}
f_M &\sim \mathcal{N}(f_M \vert 0, \text{K}_{MM}), \label{eq:ind_prior} \\
f_t \vert f_M & \sim \mathcal{N}(f_t \vert  \mu(x_{t}),  \Sigma(x_t)), \label{eq:fitc0} \\
x_{t+1} \vert x_t, f_t &\sim  \mathcal{N}(x_{t+1} \vert  x_{t} + f_t, \text{Q}), \label{eq:fitc}
\end{align}
where $f_t$ are the GP predictions at time index $t$ with mean $\mu(x_t)$ and covariance $\Sigma(x_t)$ [Eqs.~\eqref{eq:gpmean} and \eqref{eq:gpcov}].
The FITC approximation has also found its way into the GPSSM literature:
\citet{doerr2018probabilistic} use it in the same way as we do (see also discussion in~\citet{ialongo2019overcoming}).

The difference to the standard formulation is mostly pronounced if we sample twice from the same region with large GP uncertainty [Eq.~\eqref{eq:gpcov}]: samples from the FITC prior are assumed to be independent, while samples from the standard prior are correlated.
Since the GP uncertainty is only large in input regions that are not covered by the inducing points, we deem the differences to be rather subtle and accept them in favor of establishing a bridge between the GPSSM and the GPSDE formulation, as we show in the following. 

\begin{figure*}[t]
    \centering
    \includegraphics[width=.9\textwidth]{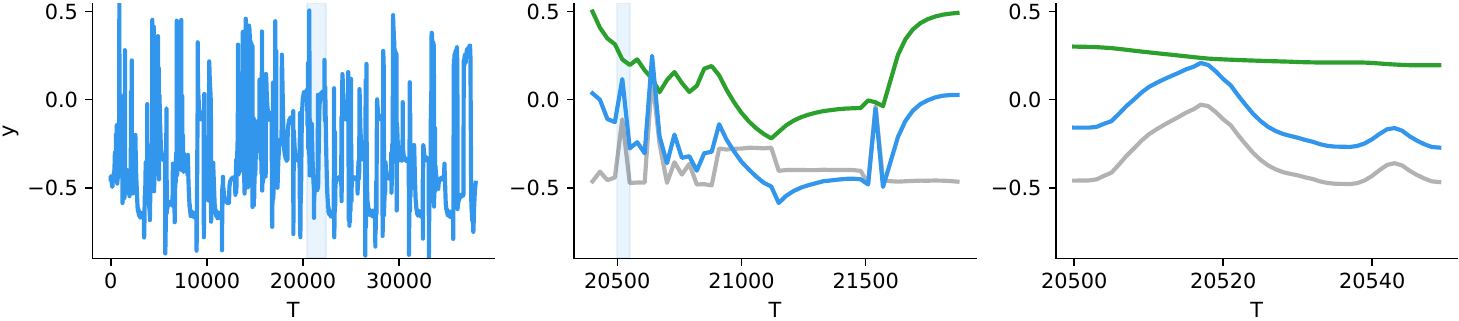}
    \caption{\textbf{Resolution Matters.}
    A semi-synthetic dataset (blue) created as a sum of two functions, one with fast varying dynamics (gray) and one with slowly varying dynamics (green)).
  \textit{Left:} Shown is the complete trajectory. Using all observations for each parameter update is too time- and memory-consuming.
  \textit{Middle:} Shown is a dilated mini-batch of size $50$ for which we have selected every $30$th observation. It allows for fitting the slowly varying function (green). 
  The fast dynamics (gray) cannot be inferred as the observations are too sparse. 
 \textit{Right:} Shown is a mini-batch of size $50$ on the standard resolution. 
  The mini-batch only covers a short interval of the trajectory as can be noted by the different time axis.
  It allows for fitting the fast varying function (gray).
  The slow dynamics (green) cannot be inferred as the gradient information is too weak.
  Our algorithm allows learning on multiple resolutions to capture effects on different timescales.
  }
    \label{fig:resolution}
\end{figure*}

\paragraph{Gaussian Process Stochastic Differential Equations}

SDEs can be regarded as a stochastic extension to ordinary differential equations where randomness enters the system via Brownian motion.
Their connection to GPSSMs is obtained by considering the SDE
\vspace{-9pt}
\begin{align}
\label{eq:sde}
dx_t = f(x_t)dt + \sqrt{\text{Q}^\Delta}dW_t,
\end{align}
where the drift term is given by the GP predictions $f(x_t) \sim \mathcal{N}(\mu^\Delta(x_t), \Sigma^\Delta(x_t))$ [Eqs.~\eqref{eq:gpmean} and \eqref{eq:gpcov}], the diffusion term by $\sqrt{\text{Q}^\Delta}$, and the Brownian motion by ${W}_t \in \mathbb{R}^{D_x}$.
In order to clearly distuingish the notation from the discrete GP transition function in Section~\ref{sec:gpssm}, 
we endow all potentially different quantities with a $\Delta$.
Applying a GP prior over the drift function has been done
previously 
 in~\citet{ruttor2013approximate} and \citet{yildiz2018learning}.
A related parameterization has also been suggested by~\citet{hegde2018deep} to extend deep GPs to an infinite number of hidden layers.

The solution to Eq.~\eqref{eq:sde} is a stochastic process over $x_t$.  
Except for a few cases, such as linear time-invariant systems, SDEs cannot be solved analytically and require numerical integration.
Hence, we apply the Euler-Maruyama scheme (see e.g.~\cite{sarkka2019applied}) to draw approximate samples:
\begin{align}
\label{eq:fj}
f_j \vert f_M \sim&\
\mathcal{N}(f_{j} \vert \mu^\Delta(x_j), \Sigma^\Delta(x_j)), \\
    \label{eq:em}
   x_{j+ 1} \vert x_j, f_j  
 \sim &\
    \mathcal{N}(x_{j+1} \vert x_j + R \Delta_t f_j, R \Delta_t \text{Q}^\Delta),
\end{align}
where $f_j$ corresponds to the GP prediction at index $j$.
The stepsize is given by $R \Delta_t$ where $R$ is the resolution and
 $\Delta_t$ is the time interval between two adjacent observations in the time series $y_T$.
Note that we employ the index $j$ to denote the time indices in the Euler-Maruyama scheme, whereas we use the index $t$ in the GPSSM formulation.
Consequently, a time index $t$ indicates a time $t \Delta_t$ after the starting time, whereas the index $j$ signifies a time $jR\Delta_t$ after the starting time.

The Euler-Maruyama method
converges to the true solution with shrinking step size $R \Delta_t$.
Prior work often sets $R  \ll 1$ corresponding to a deep GP transition function which is justified by its strong order of convergence of 1/2.
However, the convergence order only states that we need to increase the number of Monte Carlo samples quadratically in order to achieve a linear reduction in the expected approximation error.
We cannot deduce an appropriate step size $R \Delta_t$ from the convergence order alone, since the latter highly depends on the SDE form which is in our case characterized by the kernel, $k^\Delta(x,x')$, and the covariance $\text{Q}^\Delta$.


\section{Multi-Resolution Gaussian Process State-Space Models}
\label{sec:method}
Standard training of GPSSM models is restricted to a single resolution which hampers inference for long sequences with fast and slow transitions.
In this work, we introduce a novel GPSSM architecture that decomposes the latent space into multiple independent components.
We first extend doubly-stochastic variational inference for this model class.
Then, we show that this inference scheme can be generalized
such that each component is learned with a dedicated resolution in order to capture effects on different timescales.
Our training algorithm builds on  the observation that we can interpret the GPSSM transition function as a discretized SDE, which allows us to train each component with a different resolution under a unifying framework.

\subsection{Probabilistic Model}
\begin{wrapfigure}{R}{0.3\textwidth}
\centering
 \vspace{-20pt}
\includegraphics[width=0.2\textwidth]{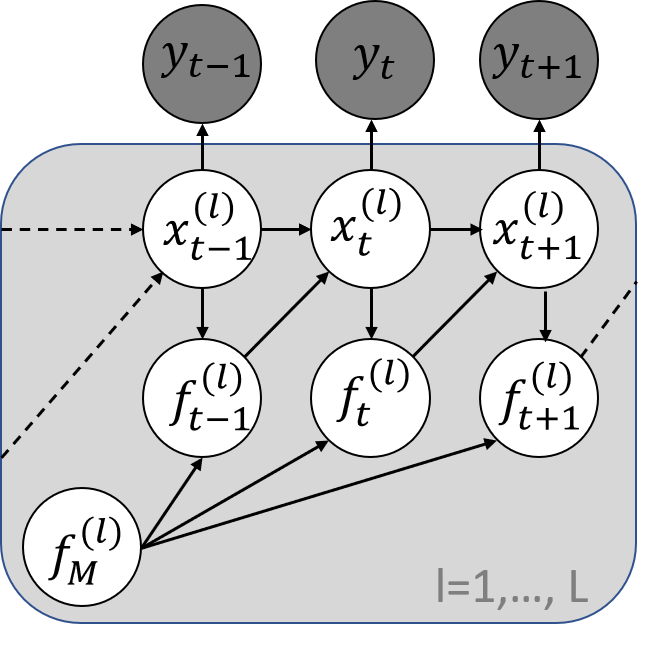}
    \caption{\textbf{Plate diagram.}
  }
    \label{fig:cartoon}
\end{wrapfigure}

\label{sec:model}
Our model splits the latent state into $L$ components,
 $x_t = \{x_t^{(l)} \}_{l=1}^L$,
that evolve independently over time:
\begin{align}
\label{eq:multi_prior}
f_t^{(l)} | f^{(l)}_M &\sim  \mathcal{N}(f^{(l)}_t \vert  \mu^{(l)}(x^{(l)}_{t}), \Sigma^{(l)}(x^{(l)}_t))\\
x^{(l)}_{t+1} \vert x^{(l)}_{t}, f_t^{(l)}
& \sim   
\mathcal{N}(x^{(l)}_{t+1} \vert  x^{(l)}_{t} +  f^{(l)}_{t} ,
\text{Q}^{(l)}). \label{eq:multi_transition} 
\end{align}

All terms are given by their equivalents in Eqs.~\eqref{eq:fitc0} and \eqref{eq:fitc} with 
 $x_t^{(l)} \in \mathbb{R}^{D_l}$ and $\sum_{l} D_l = D_x$.
Note that our proposed model can be cast into the standard formulation (Section~\ref{sec:fitc}) when allowing separate kernel hyperparameters for each latent state: the kernel hyperparameters are shared for all latent states within one component, and the latent component $x_t^{(l)}$ depends only on $x_{t-1}^{(l)}$ by the use of automatic relevance determination.
We chose a structured latent space in order to be able to learn each component with a different resolution (see Section \ref{sec:inference_multi}) which would not be possible within the standard framework.

\paragraph{Augmented model}
Collecting and simplifying all terms, we arrive at the augmented joint density
\begin{align}
p(x_0, x_{T}, f_M, y_T) 
=&
\prod_{l=1}^L
p(x^{(l)}_0)
\prod_{l=1}^L
p(f^{(l)}_M) 
\prod_{t=1}^T
p(y_t \vert x_t) 
 \prod_{t=0,l=1}^{T-1,L}
p(x^{(l)}_{t+1} \vert x^{(l)}_{t}, f^{(l)}_M) , 
\label{eq:pjoint}
\end{align}
where $x_0 = \{x_0^{(l)} \}_{l=1}^L$ are the initial latent states. 
We assume that their distribution decomposes between the components and $p(x_0^{(l)}) = \mathcal{N}(x^{(l)}_0 \vert \mu^{(l)}_0, \text{Q}^{(l)}_0)$ with mean $\mu^{(l)}_0 \in \mathbb{R}^{D_l}$ and covariance $\text{Q}^{(l)}_0 \in \mathbb{R}^{D_l \times D_l}$.

The transition probability $p(x^{(l)}_{t+1} \vert x^{(l)}_{t}, f^{(l)}_M) = \mathcal{N}(x^{(l)}_{t+1} \vert  x^{(l)}_{t} +  \mu^{(l)}(x^{(l)}_{t}),
\text{Q}^{(l)} + \Sigma^{(l)}(x^{(l)}_t))$ is obtained by marginalizing out the $f^{(l)}_t$ [which we assume to be conditionally independent given the $f^{(l)}_M$, see Eq.~\eqref{eq:multi_prior}] from Eq.~\eqref{eq:multi_transition} via standard Gaussian integrals.

While it is hard to read out from the formulas directly, analytically marginalizing out the inducing points $f_M^{(l)}$ from Eq.~\eqref{eq:pjoint} leads to a coupling between all latent states $x^{(l)}_T$ as we state in the following theorem.
\begin{theorem}
\label{thm_main1}
	For the prior of the GPSSM in Eq.~\eqref{eq:pjoint}, the marginals of the latent state at time point $ t\in \{1,\dots,T\} $, can be obtained as
	\begin{equation*}
		p(x^{(l)}_t) = \int p(x^{(l)}_0) \left[ \prod_{t'=1}^{t} p(x^{(l)}_{t'}|x^{(l)}_{0:t'-1}) \right] \prod_{t'=0}^{t-1} dx^{(l)}_{t'},
	\end{equation*}
	where all terms are Gaussian, $p(x^{(l)}_{t}|x^{(l)}_{0:t-1}) = \gauss{x^{(l)}_t}{\doublehat{\mu}_t^{(l)}}{\doublehat{\Sigma}_t^{(l)}}$, 
	and the mean $\doublehat{\mu}_t^{(l)}$ and covariance $\doublehat{\Sigma}_t^{(l)}$ depend on all previous states $x^{(l)}_{0:t-1} = \{x_{t'}^{(l)}\}_{t'=0}^{t-1}$.
\end{theorem}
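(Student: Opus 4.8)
The plan is to exploit two structural facts. First, the prior in Eq.~\eqref{eq:pjoint} factorizes over the $L$ components, so it suffices to prove the claim for a single component, and we may freely drop the observations $y_T$ since we are after the \emph{prior} marginal of $x^{(l)}_t$; to lighten notation I would drop the superscript $(l)$. Second, after marginalizing the intermediate GP values $f_t^{(l)}$ — exactly as described in the paragraph preceding the theorem — each transition becomes \emph{linear--Gaussian in the inducing outputs} $f_M$, namely $p(x_{t'+1}\mid x_{t'}, f_M) = \gauss{x_{t'+1}}{x_{t'} + A_{t'} f_M}{R_{t'}}$ with coefficient matrix $A_{t'} := \text{K}_{t'M}(\text{K}_{MM})^{-1}$ and noise covariance $R_{t'} := \text{Q} + \Sigma(x_{t'})$, \emph{both depending only on $x_{t'}$}. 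Thus the single-component prior reads $p(x_{0:T}, f_M) = p(x_0)\, p(f_M) \prod_{t'=0}^{T-1} \gauss{x_{t'+1}}{x_{t'} + A_{t'} f_M}{R_{t'}}$ with $p(f_M) = \gauss{f_M}{0}{\text{K}_{MM}}$.

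The heart of the argument is an induction on $t$ establishing the invariant that the inducing-output posterior given a trajectory prefix is Gaussian with parameters that are functions of that prefix: $p(f_M\mid x_{0:t}) = \gauss{f_M}{m_t(x_{0:t})}{V_t(x_{0:t})}$. The base case $t = 0$ is immediate because $x_0 \perp f_M$ a priori, so $p(f_M\mid x_0) = p(f_M) = \gauss{f_M}{0}{\text{K}_{MM}}$. For the inductive step I would (a) compute the conditional of the next state by a one-line Gaussian marginalization (using that $x_{t+1}$ is conditionally independent of $x_{0:t-1}$ given $x_t$ and $f_M$), $p(x_{t+1}\mid x_{0:t}) = \int \gauss{x_{t+1}}{x_t + A_t f_M}{R_t}\,\gauss{f_M}{m_t}{V_t}\, df_M = \gauss{x_{t+1}}{x_t + A_t m_t}{R_t + A_t V_t A_t^{\top}}$, which is precisely the claimed Gaussian conditional with $\doublehat{\mu}_{t+1} := x_t + A_t m_t$ and $\doublehat{\Sigma}_{t+1} := R_t + A_t V_t A_t^{\top}$, both functions of $x_{0:t}$; and (b) update the inducing posterior by Bayes' rule, $p(f_M\mid x_{0:t+1}) \propto p(f_M\mid x_{0:t})\, p(x_{t+1}\mid x_t, f_M)$, a Gaussian-in-$f_M$ factor times a linear--Gaussian likelihood in $f_M$, and read off $m_{t+1}, V_{t+1}$ from the standard Gaussian conditioning identities, noting they are functions of $x_{0:t+1}$. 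This closes the induction.

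To conclude, I would invoke the chain rule $p(x_{0:t}) = p(x_0) \prod_{t'=1}^{t} p(x_{t'}\mid x_{0:t'-1})$ and observe that marginalizing the future states $x_{t+1:T}$ is trivial, since each transition factor $p(x_{t'+1}\mid x_{t'}, f_M)$ integrates to one over $x_{t'+1}$; integrating out $x_{0:t-1}$ then yields the displayed formula with the Gaussian conditionals $p(x_{t'}\mid x_{0:t'-1}) = \gauss{x_{t'}}{\doublehat{\mu}_{t'}}{\doublehat{\Sigma}_{t'}}$ identified in the induction. Reinstating the superscript $(l)$ and using the factorization over $l$ finishes the proof; the per-dimension ARD structure within a component only renders $A_{t'}, R_{t'}, V_t$ block-diagonal and does not affect anything.

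The one point I expect to need care — and to be worth stressing — is that the full joint $p(x_{0:t})$ is \emph{not} Gaussian: the coefficients $A_{t'}$ and noise covariances $R_{t'} = \text{Q} + \Sigma(x_{t'})$ depend nonlinearly on the latent states, so one cannot simply declare the chain jointly Gaussian and be done. The induction circumvents this precisely because conditioning on $x_{0:t-1}$ \emph{freezes} all these quantities, collapsing each step to a routine linear--Gaussian computation; keeping the bookkeeping honest amounts to tracking that $m_t, V_t$ (and hence $\doublehat{\mu}_t, \doublehat{\Sigma}_t$) are functions of the conditioning trajectory rather than attempting to exhibit them in closed form.
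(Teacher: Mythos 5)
Your proposal is correct and follows essentially the same route as the paper: the paper's Lemma~\ref{lemma} is exactly your inductive invariant that $p(f_M\mid x_{0:t})$ stays Gaussian with prefix-dependent parameters, its inductive step is your two-part Gaussian marginalization/conditioning update, and the prior case is then obtained by setting $m_M=0$, $S_M=K_{MM}$. The only difference is that the paper carries explicit closed-form expressions for all means and covariances (verified via the block matrix inversion formula), which the full appendix statement requires but the shortened theorem you were asked to prove does not.
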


This is a shortened version of Thm.~\ref{theoremprior} in Appx.~\ref{sec:appxprior}.
There, we provide the exact formulas for $p(x^{(l)}_t)$ and provide a proof which is based on induction.
While we do not use the marginal $p(x^{(l)}_t)$ in our inference scheme since it scales with $O(t^3)$, we use a slight generalization of this theorem in order to prove the equivalence between the GPSSM and the discretized SDE formulation (see Section \ref{sec:inference_multi}). 


In the following, we derive how this model type can be trained over a single resolution. 
We then proceed in showing that our model formulation allows training each component on a different resolution under a single objective by interpreting the GP transition function from a SDE perspective.

\subsection{Training over a Single Resolution}
\label{sec:inference}

Multi-component GPSSMs can be trained over a single resolution by extending the work of~\citet{doerr2018probabilistic}.
We start by introducing the structured approximate posterior
\begin{align}
q({x}_{0},x_T, {f}_{M})&=
\prod_{l=1}^L
q(x^{(l)}_0) 
\prod_{l=1}^L
q(f^{(l)}_M)
 \prod_{t=0,l=1}^{T-1,L}
p(x^{(l)}_{t+1} \vert x^{(l)}_{t}, f^{(l)}_M).
\label{eq:q}
\end{align}
The approximate posterior over the inducing outputs decomposes between the components and is given by
$q(f_M^{(l)})= \mathcal{N}\left({f}^{(l)}_{M} \mid m^{(l)}_{M}, \text{S}^{(l)}_{M}\right)$ with free parameters $m^{(l)}_M \in \mathbb{R}^{M}, \text{S}^{(l)}_M \in \mathbb{R}^{M \times M}$. 
We choose as variational distribution over the initial latent states  $q(x^{(l)}_0)= \mathcal{N}\left({x}^{(l)}_{0} \mid m^{(l)}_{0}, \text{S}^{(l)}_{0} \right)$, where $m^{(l)}_0 \in \mathbb{R}^{D_l}, \text{S}^{(l)}_0 \in \mathbb{R}^{D_l \times D_l}$ are free parameters. 
More flexible recognition models can easily be incorporated~\citep{doerr2018probabilistic}.

\paragraph{Variational Inference} 
\sloppy
We want to find the optimal values for the variational parameters 
that minimize the KL divergence between the approximate posterior $q(\cdot)$ and the true posterior $p(\cdot \vert y_T)$. 
Analogously, we can maximize the lower bound  $\mathcal{L}$ to the log marginal likelihood~\citep{blei2017variational}:
\begin{align}
\mathcal{L}=&\quad
\mathbb{E}_{ q(x_0, x_T, f_M)} \left[ 
\log \frac{p(x_0, x_T, f_M, y_T)}{q(x_0, x_T, f_M)} 
\right]
\label{eq:elbo1}
\\
=&
\sum_{t=1}^T \mathbb{E}_{q\left({x}_{t}\right)}\left[\log p\left({y}_{t} \mid {x}_{t}\right)\right]
- \operatorname{KL}\left(q\left(x_{0}\right) \big \Vert
p\left(x_{0} \right)\right)
-  \operatorname{KL}\left(q\left(f_{M}\right) \big \Vert p\left(f_{M} \right)\right),
\label{eq:elbo2}
\end{align}
where Eq.~\eqref{eq:elbo2} results from plugging Eqs.~\eqref{eq:pjoint} and~\eqref{eq:q} into Eq.~\eqref{eq:elbo1}.
Here $q(x_0)=\prod_{l=1}^L q(x^{(l)}_0)$ and analogously $p(x_0)$, $q(f_M)$ and $p(f_M)$.
The marginal $q(x_t)=\prod_{l=1}^L q(x^{(l)}_t)$  decomposes between the components with $q(x^{(l)}_t)= \int q(x^{(l)}_t \vert f_M^{(l)}) q(f^{(l)}_M) df_M^{(l)}$ and
$
 q(x^{(l)}_t \vert f_M^{(l)})
 =
  \int q(x^{(l)}_0)  \prod_{t'=0}^{t-1} p(x^{(l)}_{t'+1} \vert x^{(l)}_{t'}, f^{(l)}_M) \prod_{t'=0}^{t-1}  dx^{(l)}_{t'}
$.

As a final remark, the variational distribution $q(x^{(l)}_t)$ has no closed-form solution 
and we present different Monte Carlo sampling strategies in Supplementary Material~\ref{sec:appxgpssm}.
While all of these sampling schemes can be combined with our multi-resolution training, we adopt in our experiments the scheme from~\citet{ialongo2019overcoming} since it leads to unbiased samples and scales linearly with $O(t)$.

\paragraph{Backfitting Algorithm}
Since the variational posterior [Eq.~\eqref{eq:q}] decomposes between the components, we can apply an iterative learning algorithm for parameter optimization.
The backfitting algorithm~\citep{breiman1985estimating} cycles through all $L$ components to find the optimal set of parameters $\Theta = \{\theta^{(l)}\}_{l=1}^L$ where $\theta^{(l)} = \{m_0^{(l)}, S_0^{(l)}, m_M^{(l)}, S_M^{(l)} \}$.
In each step, we perform an inner optimization to update the parameters $\theta^{(l)}$ of the $l$-th component, while keeping all other parameters 
$\Theta \setminus \theta^{(l)}$ fixed.
While the benefits of a sequential learning scheme might not be clear yet, we will exploit its assumptions in the subsequent section to learn the parameters $\theta^{(l)}$ of each component with a different resolution in order to capture effects on multiple time scales. 

\paragraph{Mini-Batching}
Since the lower bound $\mathcal{L}$ decomposes between the time points, we can obtain an unbiased estimate using only a subset of the sequence~\citep{bottou2010large}, 
$\sum_{t=1}^T \mathbb{E}_{q\left({x}_{t}\right)}\left[\log p\left({y}_{t} \mid {x}_{t}\right)\right]  \approx \frac{T}{B}
 \sum_{t=t_0}^ {B+t_0} \mathbb{E}_{q\left({x}_{t  }\right)}\left[\log p\left({y}_{t } \mid {x}_{t}\right)\right]$
where $B$ is the batch size and $t_0$ denotes the first time index in the batch.
To sample efficiently from the marginal  
$q\left({x}_{t  }\right)$, we make one rather common approximation~\citep{aicher2019stochastic}:
We break the temporal dependency between $x_{t}$, and its predecessors $x_0, \ldots, x_{t_0-B_0}$, where $B_0$ is the buffer size,
by sampling $x_{t_0-B_0}$ directly from the recognition model, $q(x_0)$.
Together with the reparameterization trick~\citep{kingma2013auto}, we can exploit this subsampling scheme for computing cheap gradients during parameter optimization. 
However, breaking the temporal dependency also leads to biased gradients: effects that evolve slower than the size of the mini-batch can no longer be inferred. 

In principle, one could resolve this issue by 
downsampling the data in a preprocessing step.
However, this comes at the expense of fast varying dynamics that can then no longer be modeled (see Figure~\ref{fig:resolution}).
We compare to this approach in our experiments.

\subsection{Training over Multiple Resolutions}
\label{sec:inference_multi}
Prior work on GPSSMs takes only the dynamics of a single resolution into account which is not sufficient if effects on multiple time scales are present.
To circumvent this shortcoming, we proceed by interpreting the GP transition model through the lens of SDEs.

\paragraph{Relationship to SDEs}
Consider multi-component state-space models in which the transition model of the $l$-th component is given by 
\begin{align}
   p^\Delta(x^{(l)}_{j+ 1} \vert x^{(l)}_j, f^{(l)}_M)  =&
   \ \mathcal{N}(x^{(l)}_{j+1} \vert x^{(l)}_j + R \Delta_t \mu^\Delta(x^{(l)}_j), 
\label{eq:emm}
\quad 
(R \Delta_t)^2 \Sigma^\Delta(x^{(l)}_j) + R \Delta_t \text{Q}^\Delta),
\end{align}
where $\mu^\Delta(x^{(l)}_j)$ and $\Sigma^\Delta(x^{(l)}_j)$ are the equivalents of the GP mean and variance predictions in Eqs.~\eqref{eq:gpmean} and~\eqref{eq:gpcov},
and we again
 marginalized the local latent variables $f_j^{(l)}$ out of the discretized SDE [Eqs.~\eqref{eq:fj} and \eqref{eq:em}] using standard Gaussian calculus.
After restricting $R \geq 1$ to be integer,
we define all remaining terms of the model and the structured variational family analogously as in Eqs.~\eqref{eq:pjoint} and \eqref{eq:q}, leading to the lower bound
\begin{align}
\mathcal{L}_{\Delta}=&
\sum_{j=1}^J \mathbb{E}_{q^\Delta \left({x}_{j}\right)} \! \left[\log p^\Delta \! \left({y}_{j} \mid {x}_{j}\right)\right]
\! -
\!
 \operatorname{KL}\left(q^\Delta \! \left(x_{0}\right) \! \big \Vert
p^\Delta \! \left(x_{0} \right)\right) 
-
  \operatorname{KL}\left(q^\Delta \left(f_{M}\right) \big \Vert p^\Delta \left(f_{M} \right)\right),
\label{eq:elbo_sde}
\end{align}
where $J=T/R$. 
We next present the equivalence between the GP and discretized SDE formulations for $R=1$, i.e. for equal time steps.
\begin{theorem}
\label{thm_main2}
For $R=1$, there exists a setting of the model and variational parameters of the SDE formulation in terms of those of the GP formulation such that ~$\mathcal L=\mathcal{L}_\Delta$.
\end{theorem}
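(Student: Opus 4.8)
The plan is to exhibit an explicit invertible correspondence between the parameters and latent variables of the two formulations and then check that, under it, $\mathcal L$ [Eq.~\eqref{eq:elbo2}] and $\mathcal L_\Delta$ [Eq.~\eqref{eq:elbo_sde}] agree term by term. The setup is favourable because for $R=1$ we have $J=T$ and $j\Delta_t=t\Delta_t$, so the two models live on the same time grid and share the same latent states $x^{(l)}_t$, observations $y_t$, and initial-state variables; the only objects that must be related are, for each component $l$, the GP kernel $k^{(l)}$ versus its SDE counterpart, the process noise $\mathrm{Q}^{(l)}$ versus $\mathrm{Q}^\Delta$, the inducing outputs $f^{(l)}_M$, and the variational parameters $m^{(l)}_M,\mathrm{S}^{(l)}_M$. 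I would keep the inducing inputs, the initial-state prior and posterior $(\mu^{(l)}_0,\mathrm{Q}^{(l)}_0,m^{(l)}_0,\mathrm{S}^{(l)}_0)$, and the emission model $(C,\Omega)$ unchanged, so that $p^\Delta(x_0)=p(x_0)$, $q^\Delta(x_0)=q(x_0)$, and $p^\Delta(y_j\mid x_j)=p(y_t\mid x_t)$ for $j=t$ are immediate.

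The heart of the argument is matching the single-step transition kernels after marginalizing out the local GP values $f^{(l)}_t$ (resp.\ $f^{(l)}_j$). The GP transition is $\gauss{x^{(l)}_{t+1}}{x^{(l)}_t+\mu^{(l)}(x^{(l)}_t)}{\mathrm{Q}^{(l)}+\Sigma^{(l)}(x^{(l)}_t)}$ with mean correction $\mu^{(l)}(x)=K_{xM}\KMinv f^{(l)}_M$, whereas the $R=1$ SDE transition [Eq.~\eqref{eq:emm}] is $\gauss{x^{(l)}_{j+1}}{x^{(l)}_j+\Delta_t\,\mu^\Delta(x^{(l)}_j)}{\Delta_t^2\Sigma^\Delta(x^{(l)}_j)+\Delta_t\mathrm{Q}^\Delta}$. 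The key observation is that the GP predictive mean is invariant under a global rescaling of the kernel at fixed inducing outputs, while the predictive covariance and the inducing-output prior $\mathcal N(f^{(l)}_M\mid 0,K_{MM})$ are not; hence one cannot match both the transition mean and the prior by rescaling the kernel alone unless $\Delta_t=1$. The remedy is to do it jointly: set $k^{\Delta,(l)}=k^{(l)}/\Delta_t^2$ and reparameterize $f^{\Delta,(l)}_M=f^{(l)}_M/\Delta_t$. A short computation then gives $K^\Delta_{xM}(K^\Delta_{MM})^{-1}=K_{xM}\KMinv$ and $\Sigma^\Delta(x)=\Sigma^{(l)}(x)/\Delta_t^2$, so $\Delta_t\,\mu^\Delta(x)=\mu^{(l)}(x)$ and $\Delta_t^2\Sigma^\Delta(x)=\Sigma^{(l)}(x)$ hold automatically, leaving only $\Delta_t\mathrm{Q}^\Delta=\mathrm{Q}^{(l)}$, i.e.\ $\mathrm{Q}^\Delta=\mathrm{Q}^{(l)}/\Delta_t$. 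Pushing the same linear change of variables through the variational family gives $m^{\Delta,(l)}_M=m^{(l)}_M/\Delta_t$ and $\mathrm{S}^{\Delta,(l)}_M=\mathrm{S}^{(l)}_M/\Delta_t^2$, and the induced SDE prior on $f^{\Delta,(l)}_M$ is $\mathcal N(\cdot\mid 0,K_{MM}/\Delta_t^2)$, which is exactly the law of $f^{(l)}_M/\Delta_t$ — consistent with the choice $k^{\Delta,(l)}=k^{(l)}/\Delta_t^2$.

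With this dictionary fixed, I would verify $\mathcal L=\mathcal L_\Delta$ termwise. The initial-state KL terms coincide because the relevant distributions were left unchanged. The inducing-output KL is invariant because $q^\Delta(f^{\Delta,(l)}_M)$ and $p^\Delta(f^{\Delta,(l)}_M)$ are the push-forwards of $q(f^{(l)}_M)$ and $p(f^{(l)}_M)$ under the same invertible linear map $z\mapsto z/\Delta_t$, and KL divergence is invariant under invertible transformations of the random variable; equivalently, substituting $K^\Delta_{MM}=K_{MM}/\Delta_t^2$, $m^\Delta_M=m_M/\Delta_t$, $\mathrm{S}^\Delta_M=\mathrm{S}_M/\Delta_t^2$ into the Gaussian KL formula makes every $\Delta_t$ cancel. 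For the data terms, it suffices to show $q^\Delta(x^{(l)}_j)=q(x^{(l)}_t)$ for $j=t$: writing $q(x^{(l)}_t)=\int q(x^{(l)}_0)\big[\prod_{t'=0}^{t-1}p(x^{(l)}_{t'+1}\mid x^{(l)}_{t'},f^{(l)}_M)\big]q(f^{(l)}_M)\,dx^{(l)}_{0:t-1}\,df^{(l)}_M$ and its SDE analogue, the equality of the per-step kernels established above (under $f^\Delta_M=f_M/\Delta_t$) together with $q^\Delta(f^\Delta_M)\,df^\Delta_M=q(f_M)\,df_M$ identifies the two integrals, and then $\mathbb{E}_{q^\Delta(x_j)}[\log p^\Delta(y_j\mid x_j)]=\mathbb{E}_{q(x_t)}[\log p(y_t\mid x_t)]$ since also $p^\Delta(y_j\mid x_j)=p(y_t\mid x_t)$. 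Summing over $t$ and over the components $l$ gives $\mathcal L_\Delta=\mathcal L$. (Equivalently, one can check the equality at the level of the $f_M$-marginalized prior via Theorem~\ref{thm_main1}, whose mean and covariance are fully determined by the per-step conditional kernels and the inducing prior, both of which we matched.) I expect the only genuinely non-mechanical step to be the one in the second paragraph — realizing that mean-matching and prior-matching force the coupled choice $k^\Delta=k/\Delta_t^2$, $f^\Delta_M=f_M/\Delta_t$, $\mathrm{Q}^\Delta=\mathrm{Q}/\Delta_t$; everything after that is a routine invariance check.
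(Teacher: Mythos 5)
Your proof is correct, but it reaches the parameter correspondence by a different route than the paper. The paper (Cor.~\ref{eqi_marg_cor}--\ref{eqi_elbo_cor} in Appx.~\ref{sec:appxsde}) keeps the inducing outputs and their variational parameters untouched ($m_M^\Delta=m_M$, $S_M^\Delta=S_M$, $k^\Delta(x_m,x_{m'})=k(x_m,x_{m'})$) and instead rescales the kernel \emph{asymmetrically}: cross-covariances $k^\Delta(x_j,x_m)$ by $1/(R\Delta_t)$ and latent-latent covariances $k^\Delta(x_j,x_{j'})$ by $1/(R\Delta_t)^2$ — an interdomain-GP construction that makes the $f_M$-KL term identical by inspection. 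You instead rescale the kernel \emph{uniformly} by $1/\Delta_t^2$ (so $K_{MM}$ changes too) and compensate with the linear reparameterization $f_M^\Delta=f_M/\Delta_t$, $m_M^\Delta=m_M/\Delta_t$, $S_M^\Delta=S_M/\Delta_t^2$; the KL term is then preserved by invariance under a common invertible pushforward rather than being literally unchanged. Both dictionaries yield the same predictive mean $K_{xM}K_{MM}^{-1}f_M$ and covariance, hence the same marginalized transition kernel, and both are valid. A second, arguably more substantive difference: the paper proves the equality of the marginals $q^\Delta(x_j)=q(x_t)$ by first establishing the full analytical marginalization of $f_M$ for the SDE formulation (Thm.~\ref{sde_theorem}, proved by the same induction as Thm.~\ref{theorem}) and then comparing the resulting closed forms, whereas you match the per-step conditionals $p^\Delta(x_{j+1}\mid x_j,f_M^\Delta)=p(x_{t+1}\mid x_t,f_M)$ directly and identify the two integrals over $f_M$ by a change of variables. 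Your route is more elementary and avoids re-deriving the marginalization theorem; what it gives up is the explicit closed-form marginals of Thm.~\ref{sde_theorem}, which the paper wants anyway for the general case $R\geq 1$ and for the multi-resolution algorithm. One small caveat: your parenthetical claim that one \emph{cannot} match both the transition mean and the inducing prior ``by rescaling the kernel alone'' is true only for a global rescaling — the paper's asymmetric rescaling does exactly that — but this does not affect the validity of your construction.
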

\begin{proof} We provide the exact parameterization and a constructive proof in Supplementary Material~\ref{sec:appxsde}.
\end{proof}

In our proof, we first provide the analytical formulae for the marginalization over the inducing outputs $f_M$ in the SDE and in the GPSSM formulation (similar to Theorem~\ref{thm_main1}). After showing that these formulae are consistent, we show that this consistency is passed on to the evidence lower bound.

Our findings allow us to reinterpret the GP transition model [Eq.~\eqref{eq:fitc}] as a discretized SDE with $R=1$. Choosing a resolution $R>1$, we can approximate
the GPSSM lower bound [Eq.~\eqref{eq:elbo2}] using the SDE formulation [Eq.~\eqref{eq:elbo_sde}].
In consequence, this relationship allows us to train with multiple resolutions by applying different approximation levels $R$.
In the following, we take this to our advantage in order to
come up with an efficient algorithm to learn effects on multiple time scales.

\paragraph{Multi-Resolution Learning}
Our algorithm decomposes the dynamics into $L$ components corresponding to different time scales. 
The components are fit iteratively by using the backfitting algorithm whereby each component is inferred with a different resolution.
For training the components of lower resolutions, we dilate the minibatch scheme by taking only every $R$-th observation into account in order to load larger histories into a mini-batch of fixed size $B$.
However, naively computing the marginal $q(x^{(l)}_{t_0+BR})$ would be too expensive since it requires $BR$ sampling steps.
We can overcome this issue by interpreting the component under the SDE perspective with resolution level $R$ using the lower bound [Eq.~\eqref{eq:elbo_sde}]
which allows us to
draw instead $B$ approximate samples from $q^\Delta(\cdot)$  [Eq.~\eqref{eq:emm}].
Hence, we can approximate the lower bound at different resolution levels with a fixed runtime,
while the 
approximation level of the marginal is adjusted to
the resolution level of the component under consideration.
Fast transitions are captured by high-resolution components with tight discretization levels ($R=1$), while slow transitions are captured by low-resolution components with long histories $(R >1)$. 
Since our variational family assumes that the latents are independent between components [Eq.~\eqref{eq:q}],
 we can compute the simulated latents of all but the $l$-th component, $x_T^{(\neq l)}$, outside of the inner optimization scheme of the backfitting algorithm.
The latter leads not only to a reduction in runtime, but also enables the use of different resolution levels across components in order to ensure that the discretization level is sufficiently tight for fast dynamics and the history length is sufficient long for slow dynamics.
We detail out the algorithm and provide its runtime analysis in Supplementary Material~\ref{sec:pseudocode}.
\paragraph{Limitations}
\label{sec:limitations}
We build on the variational family of~\cite{doerr2018probabilistic}, that uses the prior $p(x^{(l)}_t \vert x^{(l)}_{t-1}, f_M)$ as approximate smoothing distribution $q(x^{(l)}_t \vert \cdot)$. 
While extensions to more complex variational posteriors exist, they do not allow for mini-batching~\citep{ialongo2019overcoming} or make strong independence assumptions on $q(f^{(l)}_M, x^{(l)}_{1:T})$ \citep[e.g.][]{eleftheriadis2017identification}.
The methodological novelty of our work is to a large extent agnostic to the choice of  $q(x^{(l)}_t \vert \cdot)$
 and we expect that improvements on the inference scheme for general GPSSMs can be easily combined with our work.

\section{Experiments}
\label{sec:experiments}
We validate the presented algorithm on semi-synthetic data in Section~\ref{sec:toydata} and on an emission modeling task in Section~\ref{sec:engine}. 
Both experiments confirm that using multiple resolutions compares favorably to state-of-the-art methods that operate on a single resolution only.
We compare our novel multi-resolution GPSSM (MR-GPSSM) against the standard GPSSM applying a similar inference scheme ~\citep{doerr2018probabilistic}. 
To  tease apart the effects of multiple components and multiple resolutions, we additionally introduced the multi-component GPSSM (MC-GPSSM).
The latter has the same architecture and employs the same optimization algorithm as MR-GPSSM, but applies a single resolution over all components.
We refrained from benchmarking against other non state-space GP models since this has already been done extensively in~\cite{doerr2018probabilistic}, demonstrating the benefits of their method that we compare against.
We report the performance via the root mean squared error (RMSE) and the negative test log likelihood (nLL).
The latter evaluation metric as well as many more experimental details can be found in the Supplementary Material~\ref{sec:appxexperiments}. Code is available at \url{https://version.helsinki.fi/MUPI/mr-gpssm}.

\subsection{Semi-Synthetic Data}
\label{sec:toydata}
First, we benchmarked our method on $4$ semi-synthetic datasets (see Supplementary Material~\ref{sec:appx:semi_synth}) with varying properties: fast dynamics (F), mixed dynamics (M1, M2), and slow dynamics (S). 
Dataset M1 and M2 exhibit both fast and slow dynamics, and are challenging for previous methods.
All datasets are depicted in Supplementary Figure~\ref{figapp:simulated_data} and a close-up of dataset M1 is provided in Figure~\ref{fig:resolution}.
Each dataset consists of $T=37,961$ time points, from which we used the first half for training and the second half for testing.

\begin{table*}[t]
\centering
\caption{
\textbf{Results on Semi-Synthetic Data.}
Predictive performance of GPSSM variants on four semi-synthetic datasets with varying dynamics: slow (S), mixed (M1, M2) and fast (F). The experiment is repeated $5$ times and we report the mean (standard error) over all runs.
The best performing method, and all methods whose mean statistic overlap within the standard error, are marked in bold.}
\begin{small}
\begin{tabular}{lllllllll} 
\hline
\multicolumn{1}{l}{}  & \multicolumn{1}{l}{} & \multicolumn{2}{c}{\textbf{GPSSM }}            &  & \multicolumn{2}{c}{\textbf{MC-GPSSM} }  &  & \multicolumn{1}{c}{\textbf{MR-GPSSM} (ours)}  \\ 
\cline{3-4}\cline{6-7}\cline{9-9}
\multicolumn{1}{l}{}  & \multicolumn{1}{l}{} & $R=1$                  & $R=30$                &  & $R=[1,1]$              & $R=[30,30]$    &  & $R=[30, 1]$                            \\ 
\hline
\multirow{4}{*}{RMSE} & \textbf{F}           & \textbf{0.05 (0.00)}   & 0.14 (0.00)           &  & \textbf{0.06 (0.01)}   & 0.16 (0.01)    &  & 0.07 (0.01)                            \\
                      & \textbf{M1}          & 0.16 (0.02)            & 0.14 (0.00)           &  & 0.15 (0.01)            & 0.15 (0.00)    &  & \textbf{0.08~(0.01)}                   \\
                      & \textbf{M2}          & 0.14 (0.00)            & 0.29 (0.11)           &  & 0.14 (0.00)            & 0.20 (0.01)    &  & \textbf{0.09 (0.01)}                   \\
                      & \textbf{S}           & 0.33 (0.08)            & \textbf{0.16 (0.01)}  &  & 0.29 (0.02)            & 0.20 (0.03)    &  & \textbf{0.17 (0.02)}                   \\ 
\hline
\end{tabular}
\end{small}
\label{table:synthetic}
\end{table*}

\begin{figure*}[t]
    \centering
    \includegraphics[width=\textwidth]{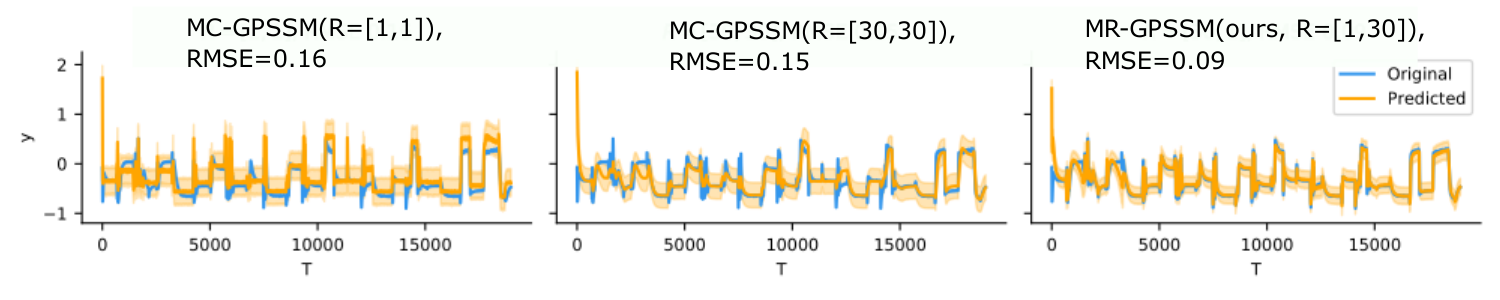}
    \caption{\textbf{Predictions on semi-synthetic dataset with mixed dynamics (M1).} 
    From left to right:
    MC-GPSSM ($R=[1,1]$, $R=[30,30]$), and MR-GPSSM ($R=[1,30]$).
    Our proposed model, MR-GPSSM, outperforms its competitors by capturing effects on different time scales, whereas MC-GPSSM ($R=1$) cannot model slow trends accurately and MC-GPSSM ($R=30$) does not catch all peaks.
  }
    \label{fig:simulated_oneexample}
\end{figure*}

For MR-GPSSM, we applied $L=2$ components with $D_x=2$ latent dimensions each, and learned one component with $R^{(f)}=1$ for fast dynamics and one with $R^{(s)}=30$ for slow dynamics. We trained each component for $600$ iterations that were split evenly into $12$ backfitting cycles.
We compared our model to MC-GPSSM using exactly the same settings.
For standard GPSSM, we set the number of latent states to $D_x=4$ and trained for $600$ iterations such that the model complexity and the number of parameter updates is comparable.
We varied the resolution for both comparison partners in $R \in \{1, 30\}$.
The results are shown in Table~\ref{table:synthetic} and Supplementary Table~\ref{table:synthetic2}. 
We observe that (MC-)GPSSM performs well if the resolution is chosen appropriately:
Fast dynamics (dataset F) can only be accurately predicted using a small resolution ($R=1$), whereas slow dynamics (dataset S) require a large resolution ($R=30$). 
Moreover, choosing the wrong resolution leads not only to a decrease in performance, but also to convergence problems which lead to the removal of  one run of GPSSM ($R=1$) on dataset S.
Our proposed model, MR-GPSSM, achieves comparable results on both tasks.
On datasets with mixed dynamics (M1, M2), MR-GPSSM improves over the single resolution models, since it is the only method that captures effects on multiple timescales (see Figure~\ref{fig:simulated_oneexample} for dataset M1 and Supplementary Figure~\ref{figapp:simulated_example} for the remaining datasets).

Next, we investigated if increasing the mini-batch size can provide an alternative solution for capturing slow dynamics. Instead of learning the dynamics with resolution $R=30$ and minibatch size $B=50$, as done previously, we increased the mini-batch size to $B=1500$ and applied the standard resolution $R=1$.
We confirm on dataset S that the latter strategy does not yield competitive results  even if we allow for prolonged training time (see Supplementary Table \ref{table:batchsizeVSresolution}).

\begin{figure*}[h]
    \centering
    \includegraphics[width=\textwidth]{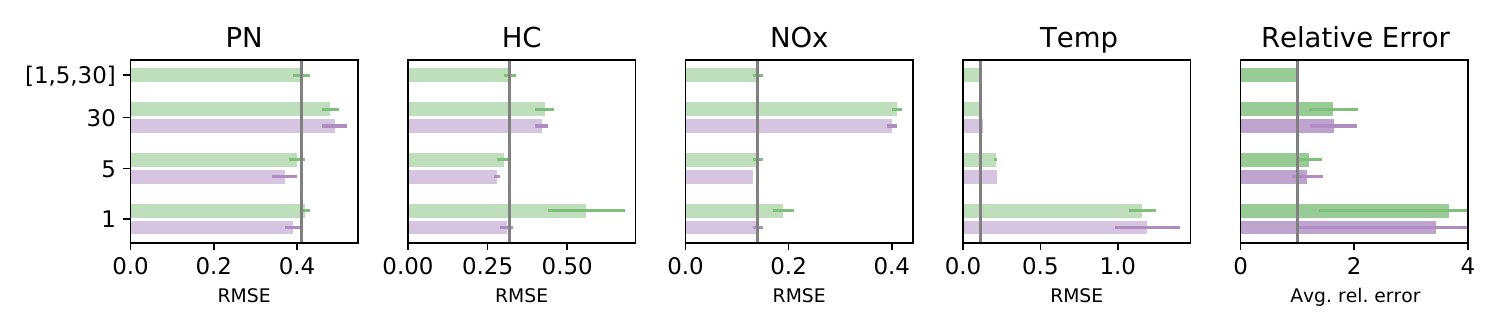}
    \caption{\textbf{Predictive Performance on Engine Modeling Task.} From left to right: RMSE on the four outputs PN, HC, NOx, Temp and relative error with respect to our method, averaged over all outputs.
    From top to bottom: We compare our method, MR-GPSSM (indicated with a gray line), to MC-GPSSM and GPSSM sorted according to decreasing resolution. MR/MC-GPSSM results are shown in green and GPSSM results in purple. Our method, MR-GPSSM, is the only method that performs consistently well over all outputs (see rightmost plot).
    We report the mean value and its standard error over 5 repetitions.}
    \label{fig:engine}
\end{figure*}
\subsection{Engine Modeling Task}
\label{sec:engine}
This dataset consists of 22 independent measurements containing the raw emissions of an engine.
Each measurement is recorded with $10$Hz and between $21$ and $63$ minutes long, resulting in over 500,000 data points. The system is described by $4$ inputs and the following $4$ outputs: particle numbers (PN), hydrocarbon concentration (HC), nitrogen oxide concentration (NOx) and engine temperature (Temp).
In the following, we split the data into 16 train and 6 test measurements.
For each output, the experiment is carried out $5$ times using stratified cross-validation since the design of experiment differs between measurements (see Supplementary Material~\ref{sec:appx_cross_validation}).
To avoid local optima, we repeated each training $3$ times using random restarts, and selected the model with the best training objective for predicting on the test set.

First, we studied if the optimal resolution differs between outputs by performing a grid search over $R \in \{1,5,10,20,30,40,50,60,70\}$ using standard GPSSM. 
We set the number of latent dimensions to $D_x=6$ and $3,000$ training iterations.
The results are depicted in Figure~\ref{fig:engine}
and Supplementary Figure~\ref{figapp:gridsearch}.
We observe that slow dynamics are in particular predominant for the output Temp.
Next, we trained MR/MC-GPSSM using a comparable configuration ($L=3$, $D_x=2$, $3,000$ iterations per component). We set the resolutions of MR-GPSSM to $R=[1,5,30]$ such that the best resolution for each output is included, and trained MC-GPSSM on each resolution independently.
The results are shown in Figure \ref{fig:engine} and Supplementary Table~\ref{table:emission}.
We observe that the PN test error is higher than for the other outputs which is in line with literature that reports high noise level for the 
PN measurement system, e.g.~\cite{frommater2018phenomenological}. Our method, MR-GPSSM, shows competitive performance across all outputs, while (MC-)GPSSM works only well if
the resolution is set adequately. 
In addition, MR-GPSSM requires less fine-tuning, and also performs well if the resolution set is varied (see Supplementary Table \ref{table:sensitivity}).
\section{Conclusion}
We have presented a novel Gaussian Process state-space model architecture that allows to traverse time with multiple resolutions.
It is composed of multiple components that evolve independently over time. 
By interpreting the transition functions as discretized stochastic differential equations, we can learn each component with a different resolution to model effects on different time scales.

The benefits of our approach are demonstrated on semi-synthetic data and on a challenging engine modeling task.
However, our methodological contribution is general and can also be applied to use cases from different domains ranging from neuroscience~\citep{prince2021parallel}, medicine~\citep{lipton2015learning} to human motion prediction~\citep{martinez2017human}.

\section*{Contributions}

The initial problem setting and idea were given by Barbara Rakitsch, while the details and experiments were designed by all authors jointly. Krista Longi was responsible of implementing the algorithm as well as performing the experiments and analysing the results. Jakob Lindinger derived the theoretical analysis of the method and was responsible for its write-up. Olaf Dünnbier provided the data set and domain knowledge for engine modeling. Barbara Rakitsch wrote the main paper with contributions of all authors.

\bibliography{references}

\appendix

\providecommand{\upGamma}{\Gamma}
\providecommand{\uppi}{\pi}
\newpage
\onecolumn
\clearpage

\beginsupplement
\appendix

\setlength\parindent{0pt}
\begin{center}
{\Large{Supplementary Material for}}

\bigskip

{\LARGE{\textbf{Traversing Time with Multi-Resolution}}}

\smallskip

{\LARGE{\textbf{Gaussian Process State-Space Models}}}

\end{center}

\section{Sampling Schemes for Gaussian Process State-Space Models}
\label{sec:appxgpssm}
\subsection{Problem Statement}
In this chapter, we present multiple Monte Carlo schemes to obtain samples from the variational posterior $ q $ of Gaussian Process (GP) state-space models (SSMs).
More precisely, we are interested in the marginals of the latent state at time point $ t\in \{1,\dots,T\} $, $ q(x_t) $.
They can be obtained as
\begin{equation}\label{eq:marginal}
q(x_t) = \int q(x_0) q(f_M,x_t,\dots,x_1|x_0) df_M \prod_{t'=0}^{t-1}dx_{t'},
\end{equation}
where
\begin{align}\label{eq:joint}
q(f_M,x_t,\dots,x_1|x_0) &= q(f_M) \prod_{t'=0}^{t-1} p(x_{t'+1}|x_{t'},f_M),\\
\label{eq:qx0}
q(x_0) &= \gauss{x_0}{m_0}{S_0},  \\
\label{eq:qfM}
q(f_M) &= \gauss{f_M}{m_M}{S_M},\\
\label{eq:transition}
p(x_{t+1}|x_{t},f_M) &= \gauss{x_{t+1}}{x_{t} + K_{tM}\KMinv f_M}{Q+K_{tt}-K_{tM}\KMinv K_{tM}^\top }. 
\end{align}
Here, $ m_0 $, $ S_0 $, $ m_M $, and $ S_M $ are variational parameters,
while $ Q $ is a model parameter, all of which have to be inferred.
Furthermore, we have $ K_{tt}=k(x_t,x_t) $, $ K_{tM} = \{ k(x_t,x_m)\}_{m=1}^M $, and $ K_{MM} = \{ k(x_m,x_{m'})\}_{m,m'=1}^M $,
where $ k(\cdot,\cdot): \mathbb{R}^{D_x}\times \mathbb{R}^{D_x} \to \mathbb{R} $ is a kernel or covariance function and $ \{x_m\}_{m=1}^M $ is a set of $ M $ so-called inducing points.\footnote{Note
that in our notation indices $ m, M $ always indicate quantities related to inducing points,
while the indices $ t, T $ always indicate other observed or latent quantities,
such that e.g.\ $ x_{t=1} $ and $ x_{m=1} $ are generally not the same.}
In the following we consider a one-dimensional latent space, i.e.~$ D_x = 1 $ and $ x_m, x_t \in \mathbb{R} $, for a less cluttered notation and therefore a better readability.
An extension to other dimensionalities $ D_x $ and to $L$ additive components is straightforward. The results and proofs are very similar.

\subsection{Overview}
\label{sec:appxgpssmOverview}
In the following, we present different Monte Carlo schemes to obtain samples from the variational posterior $q(x_t)$. 

\paragraph{Full Monte Carlo Treatment~\citep{ialongo2019overcoming}.}
We obtain samples $x_t \sim q(x_t)$ according to Eq.~\eqref{eq:marginal} by first sampling the initial latent state $x_0 \sim q(x_0)$ and the inducing outputs $f_M \sim q(f_M)$, and subsequently the latent states $x_1, \ldots x_t$ using the conditional $p(x_{t+1} \vert x_t, f_M)$. 
Note that the same set of inducing outputs $f_M$ is used in each time step to acquire valid samples.
The sampling scheme scales with $O(t)$ and we employ this one in our experiments.

\paragraph{PR-SSM Sampling Scheme~\citep{doerr2018probabilistic}.}
The marginal distribution [Eq.~\eqref{eq:marginal}] is approximated by
\begin{align}\label{eq:prism}
q(x_t) &= 
\int q(x_0) \prod_{t'=0}^{t-1} \left( \int p(x_{t'+1}|x_{t'},f_M) q(f_M) df_M \right) dx_{t'}
\\
&=
\label{eq:prism_marg}
\int q(x_0)
 \prod_{t'=0}^{t-1}
  \mathcal{N} \big ( {x_{t'+1}} \vert {x_{t'} + K_{t'M}\KMinv m_M},  K_{t't'} - K_{t'M} \KMinv K_{t'M}^\top
 \\
&
 \quad \quad \quad  \quad \quad \quad \quad \quad \quad \quad \ \
  + K_{t'M}\KMinv S_M \KMinv K_{t'M}^\top \newline
+ Q 
    \big)
 dx_{t'}
 ,
\end{align}
which allows us to analytically marginalize out the inducing outputs $f_M$ [Eq.~\ref{eq:prism_marg}], while keeping the runtime requirement to $O(t)$.
However, this approximation also leads to a biased estimate of the lower bound $\mathcal{L}$ [Eq.~\ref{eq:elbo2}] as further discussed in~\citet{ialongo2019overcoming}.

\paragraph{Analytical Marginalization over the Inducing Outputs.}
We can analytically marginalize the inducing outputs $f_M$ from the variational posterior [Eq.~\eqref{eq:marginal}] as 
we summarize in the following theorem:

\begin{theorem} \label{theorem}
	For the variational posterior of the GP SSM as defined above, the marginals of the latent state at time point $ t\in \{1,\dots,T\} $, can be obtained as
	\begin{equation}\label{eq:thmmain}
		q(x_t) = \int q(x_0) \left[ \prod_{t'=1}^{t} q(x_{t'}|x_{t'-1},\dots,x_0) \right] \prod_{t'=0}^{t-1} dx_{t'},
	\end{equation}
	where all terms are Gaussian:
	\begin{align} \label{eq:thmgauss}
	q(x_{t}|x_{t-1},\dots,x_0) &= \gauss{x_t}{\hat{\mu}_t}{\hat{\Sigma}_t},\\
	\label{eq:thmmean}
	\hat{\mu}_t &= \mut{t-1} + \St{t-1}{0:t-2} \invSt{0:t-2} \left( x_{1:t-1} - \mut{0:t-2} \right),\\
	\label{eq:thmvar}
	\hat{\Sigma}_t &= \St{t-1}{t-1} - \St{t-1}{0:t-2} \invSt{0:t-2} \St{0:t-2}{t-1}.
	\end{align}
	Here, the terms are given by
	\begin{align}
	\label{eq:thmmu}
	\mut{t} &= x_t + K_{tM}\KMinv m_M,\\
	\label{eq:thmS}
	\St{t}{t'} &= K_{tM}\KMinv S_M \KMinv K_{t'M}^\top + \delta_{tt'}(Q + K_{tt} - K_{tM} \KMinv K_{t'M}^\top).
	\end{align}
\end{theorem}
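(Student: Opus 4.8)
\emph{Proof plan.} The plan is to fix a realized trajectory $x_0,\dots,x_{t-1}$ and to exploit that, once these states are held fixed, the only remaining randomness in the integrand of Eq.~\eqref{eq:marginal} is linear--Gaussian. With $x_0,\dots,x_{t-1}$ fixed, every kernel block $K_{t'M},K_{t't'}$ ($t'\le t-1$) is a deterministic matrix, so, writing $u_{t'}:=x_{t'+1}-x_{t'}$, Eq.~\eqref{eq:transition} becomes $u_{t'}=K_{t'M}\KMinv f_M+\varepsilon_{t'}$ with $f_M\sim\gauss{f_M}{m_M}{S_M}$ and $\varepsilon_{t'}\sim\gauss{\varepsilon_{t'}}{0}{Q+K_{t't'}-K_{t'M}\KMinv K_{t'M}^\top}$ mutually independent and independent of $f_M$. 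Hence, for any $t'\le t$ with $x_0,\dots,x_{t'-1}$ fixed, $q(x_{1:t'}\mid x_0)=\int q(f_M)\prod_{s=0}^{t'-1}p(x_{s+1}\mid x_s,f_M)\,df_M$ is, as a function of $x_{t'}$, the $f_M$-marginal density of $(u_0,\dots,u_{t'-1})$ of this linear--Gaussian model evaluated at $u_s=x_{s+1}-x_s$; crucially $x_{t'}$ enters only through $u_{t'-1}$ and never through a coefficient.

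I would then marginalize out $f_M$: standard Gaussian algebra gives that $(u_0,\dots,u_{t-1})$ is jointly Gaussian with $\mathbb E[u_{t'}]=K_{t'M}\KMinv m_M$ and $\mathrm{Cov}(u_{t'},u_{s'})=K_{t'M}\KMinv S_M\KMinv K_{s'M}^\top+\delta_{t's'}\bigl(Q+K_{t't'}-K_{t'M}\KMinv K_{t'M}^\top\bigr)$. Comparing with Eqs.~\eqref{eq:thmmu}--\eqref{eq:thmS}, this says exactly that $x_{t'}+\mathbb E[u_{t'}]=\mut{t'}$ and $\mathrm{Cov}(u_{t'},u_{s'})=\St{t'}{s'}$, and consequently $u_{t'}-\mathbb E[u_{t'}]=x_{t'+1}-\mut{t'}$.

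Next I would read off the conditionals. For each $t'\le t$, $q(x_{t'}\mid x_0,\dots,x_{t'-1})=q(x_{1:t'}\mid x_0)/q(x_{1:t'-1}\mid x_0)$; by the first two steps the numerator is the $f_M$-marginal density of $(u_0,\dots,u_{t'-1})$ and the denominator that of $(u_0,\dots,u_{t'-2})$ under the same frozen coefficients, so the ratio is the Gaussian conditional of $u_{t'-1}$ given $u_0,\dots,u_{t'-2}$, evaluated at $u_{t'-1}=x_{t'}-x_{t'-1}$ and $u_s=x_{s+1}-x_s$ for $s\le t'-2$. Applying the usual Gaussian conditioning formula to the moments found above, and using $x_{t'}=x_{t'-1}+u_{t'-1}$ together with $u_s-\mathbb E[u_s]=x_{s+1}-\mut{s}$, yields precisely Eqs.~\eqref{eq:thmgauss}--\eqref{eq:thmvar}; for $t'=1$ the conditioning set is empty and one gets $q(x_1\mid x_0)=\gauss{x_1}{\mut{0}}{\St{0}{0}}$. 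Finally, the chain rule $q(x_{1:t}\mid x_0)=\prod_{t'=1}^{t}q(x_{t'}\mid x_0,\dots,x_{t'-1})$ together with $q(x_t)=\int q(x_0)\,q(x_{1:t}\mid x_0)\,\prod_{t'=0}^{t-1}dx_{t'}$ gives Eq.~\eqref{eq:thmmain}.

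The two Gaussian manipulations --- integrating out $f_M$, then conditioning --- are routine; the delicate part is purely bookkeeping: one must check that the kernel's dependence on the latent states is confined to the conditioning variables (so that each $q(x_{t'}\mid x_0,\dots,x_{t'-1})$ is genuinely Gaussian in its argument $x_{t'}$) and that the additive shifts $x_{t'}$ are absorbed consistently into the $\mut{\cdot}$/$\St{\cdot}{\cdot}$ notation. An equivalent, more computational route --- closer to the induction behind Theorem~\ref{thm_main1} --- instead maintains the Gaussian posterior $q(f_M\mid x_0,\dots,x_{t'})$ and updates it by one Kalman-type step per transition; there the main obstacle is the same, namely propagating the state-dependent kernel terms cleanly through the recursion.
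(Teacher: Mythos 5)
Your plan is correct, and it takes a genuinely different route from the paper's. The paper proves Thm.~\ref{theorem} via a technical lemma (Lem.~\ref{lemma}) established by induction on $t$: at each step it peels off one transition $p(x_{t+1}\vert x_t,f_M)$, applies the affine-Gaussian swap $p(x\vert y)p(y)=p(y\vert x)p(x)$, and explicitly maintains the conditional $q(f_M\vert x_t,\dots,x_0)$ --- exactly the Kalman-type recursion you mention as the alternative --- closing the recursion with a block-matrix-inversion identity. You instead marginalize $f_M$ in one shot: freezing the prefix $x_0,\dots,x_{t'-1}$ makes all kernel blocks deterministic, the increments $u_s=x_{s+1}-x_s$ become an affine function of $f_M$ plus independent noise, and the integral over $f_M$ is then the joint Gaussian density of $(u_0,\dots,u_{t'-1})$ with precisely the moments $\mut{\cdot}$ and $\St{\cdot}{\cdot}$ of Eqs.~\eqref{eq:thmmu}--\eqref{eq:thmS}; the conditionals in Eqs.~\eqref{eq:thmgauss}--\eqref{eq:thmvar} then drop out of ordinary Gaussian conditioning, because the ratio $q(x_{1:t'}\vert x_0)/q(x_{1:t'-1}\vert x_0)$ compares two such densities whose parameters agree on the shared blocks --- the new variable $x_{t'}$ enters only through $u_{t'-1}$ and never through a coefficient, which is indeed the crux and which you flag correctly. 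Your route avoids both the induction and the explicit form of $q(f_M\vert x_{0:t})$, at the cost of not producing that posterior as a by-product (the paper's Lem.~\ref{lemma} yields it for free, though the theorem itself does not need it). The only step worth spelling out in a full write-up is that the denominator $q(x_{1:t'-1}\vert x_0)$ really is the marginal over $u_{t'-1}$ of the numerator's Gaussian; this holds because the transition density integrates to one in $x_{t'}$ and the remaining coefficients depend only on $x_0,\dots,x_{t'-2}$.
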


The notation $ \cdot : \cdot $ is used to denote (block) column vectors or submatrices, e.g.~$ x_{1:t} = \begin{pmatrix} x_1 & \cdots & x_t \end{pmatrix}^\top \in \mathbb{R}^{t}$, or $\St{t-1}{0:t-2} = \begin{pmatrix} \St{t-1}{0} & \cdots & \St{t-1}{t-2} \end{pmatrix}  \in \mathbb{R}^{t-1} $.
Furthermore, $ \delta_{tt'} $ symbolizes the Kronecker delta.
Note that for $ t=1 $ the slices in the additional terms of Eqs.~\eqref{eq:thmmean} and \eqref{eq:thmvar} are empty and that therefore $ \hat{\mu}_1 = \mut{0} $ and $ \hat{\Sigma}_1 = \St{0}{0} $.

The problem as well as the idea for the proof is very similar to the one studied in Thm.~1 of~\citet{lindinger2020}:
The joint distribution of the latent states $x_1, ... , x_t$ cannot be seen as one joint multivariate Gaussian distribution since the latent state $x_{t'-1}$ enters the mean $\hat{\mu}_t$ and the covariance matrix $\hat{\Sigma}_t$ via the kernel matrices.
Instead, we need to come up with a recurrent formulation of the problem that is amenable to a proof by induction (see  Section~\ref{sec:appxproof} for more details).

Note that it is often the case that the analytical marginalization of global latent variables, here the $f_M$, can lead to faster convergence as opposed to marginalization via Monte Carlo sampling (see e.g.~\citet{lindinger2020},~\citet{kingma2015variational}). In this particular case, it is unlikely that the faster convergence (in terms of iterations) also results in a faster runtime (wall-clock time), as the cost of each iteration is increased from $O(t)$ to $O(t^3)$ when the analytical marginalization scheme from Thm.~\ref{theorem} is employed.
However, as we demonstrate in Sec.~\ref{sec:appxsde}, our result is of highly theoretical interest to the community: We exploit it to show that the marginal $q(x_t)$ [Eq.~\eqref{eq:marginal}] can be reinterpreted as a discretized SDE.


\subsection{Marginalization Proof}
\label{sec:appxproof}

In order to prove Thm.~\ref{theorem}, we require the following technical lemma:

\begin{lemma} \label{lemma}
	The term $ q(f_M,x_t,\dots,x_1|x_0) $ in Eq.~\eqref{eq:joint} can also be written as
	\begin{equation}\label{eq:lemmain}
	q(f_M,x_t,\dots,x_1|x_0) = q(f_M|x_t,\dots,x_0) \prod_{t'=1}^{t} q(x_{t'}|x_{t'-1},\dots,x_0),
	\end{equation}
	for $ t\in \{1,\dots,T\} $. Here, the $ q(x_{t}|x_{t-1},\dots,x_0) $ are as in Eqs.~\eqref{eq:thmgauss}-\eqref{eq:thmvar} and
	\begin{align}\label{eq:lemqfM}
	q(f_M|x_t,\dots,x_0) &= \gauss{f_M}{\hat{\mu}^t_M}{\hat{\Sigma}^t_M},\\
	\label{eq:lemmean}
	\hat{\mu}^t_M &= m_M + S_M \KMinv \left(K_{0:t-1,M}\right)^\top \invSt{0:t-1} \left( x_{1:t} - \mut{0:t-1} \right),\\
	\label{eq:lemvar}
	\hat{\Sigma}^t_M &= S_M - S_M \KMinv \left(K_{0:t-1,M}\right)^\top \invSt{0:t-1} K_{0:t-1,M} \KMinv S_M.
	\end{align}	
\end{lemma}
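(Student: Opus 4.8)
The plan is to prove Lemma~\ref{lemma} by induction on $t$, mirroring the structure one would need for Theorem~\ref{theorem} (indeed, the two are naturally proved together, with the lemma supplying the inductive engine). The base case $t=1$ is a direct computation: starting from $q(f_M,x_1|x_0) = q(f_M) p(x_1|x_0,f_M)$, I would view this as a Gaussian in the pair $(f_M, x_1)$ with $q(f_M) = \gauss{f_M}{m_M}{S_M}$ playing the role of a prior and $p(x_1|x_0,f_M)$, whose mean is affine in $f_M$ via $K_{0M}\KMinv f_M$ (see Eq.~\eqref{eq:transition}), playing the role of a linear-Gaussian likelihood. Applying the standard Gaussian conditioning/marginalization identities then gives $q(x_1|x_0) = \gauss{x_1}{\mut{0}}{\St{0}{0}}$ with $\mut{0}, \St{0}{0}$ exactly as in Eqs.~\eqref{eq:thmmu}--\eqref{eq:thmS} (the $\delta_{tt'}$ term is present since $t=t'=0$), and $q(f_M|x_1,x_0)$ of the advertised form with the $t=1$ slices in Eqs.~\eqref{eq:lemmean}--\eqref{eq:lemvar}. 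This fixes all normalizations and confirms the shape of the formulas.

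For the inductive step, assume Eq.~\eqref{eq:lemmain} holds at $t-1$. I would write
\begin{align*}
q(f_M,x_t,\dots,x_1|x_0) &= p(x_t|x_{t-1},f_M)\, q(f_M,x_{t-1},\dots,x_1|x_0)\\
&= p(x_t|x_{t-1},f_M)\, q(f_M|x_{t-1},\dots,x_0)\prod_{t'=1}^{t-1} q(x_{t'}|x_{t'-1},\dots,x_0),
\end{align*}
using the Markov structure of Eq.~\eqref{eq:joint} for the first equality and the induction hypothesis for the second. The crux is then to recombine the two factors that involve $f_M$, namely $p(x_t|x_{t-1},f_M)\, q(f_M|x_{t-1},\dots,x_0)$, into $q(x_t|x_{t-1},\dots,x_0)\, q(f_M|x_t,\dots,x_0)$. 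Again this is a linear-Gaussian update: $q(f_M|x_{t-1},\dots,x_0) = \gauss{f_M}{\hat\mu^{t-1}_M}{\hat\Sigma^{t-1}_M}$ is the "prior" on $f_M$, and $p(x_t|x_{t-1},f_M)$ is a "likelihood" linear in $f_M$. Marginalizing $f_M$ produces $q(x_t|x_{t-1},\dots,x_0)$, and conditioning produces the updated $q(f_M|x_t,\dots,x_0)$. What must be checked is that the resulting moments collapse to the closed forms in Eqs.~\eqref{eq:thmmean}--\eqref{eq:thmvar} and \eqref{eq:lemmean}--\eqref{eq:lemvar}, i.e.\ that the block-matrix expressions telescope correctly as the conditioning set grows from $x_{0:t-2}$ to $x_{0:t-1}$.

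The main obstacle, and the bulk of the work, is exactly that bookkeeping: showing that after substituting $\hat\mu^{t-1}_M$ and $\hat\Sigma^{t-1}_M$ (which already contain the inverse $\invSt{0:t-2}$ of the $(t-1)\times(t-1)$ covariance block) into the Gaussian update, one recovers an expression governed by $\invSt{0:t-1}$ with the consistent definitions of $\mut{t}$ and $\St{t}{t'}$ from Eqs.~\eqref{eq:thmmu}--\eqref{eq:thmS}. The cleanest route is to recognize the recursion as one step of sequentially conditioning a fixed joint Gaussian on one more coordinate: define the "would-be joint" covariance with blocks $\St{i}{j}$ over indices $i,j \in \{0,\dots,t-1\}$ together with the cross-covariances $K_{iM}\KMinv S_M$ to $f_M$, and then invoke the standard identity that conditioning a Gaussian on $x_{0:t-2}$ and then on $x_{t-1}$ agrees with conditioning on $x_{0:t-1}$ at once. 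This reduces the matrix algebra to one application of the Schur-complement/block-inversion formula rather than an explicit $(t-1)$-fold induction on the matrix entries, and the Kronecker-delta term in Eq.~\eqref{eq:thmS} is precisely what makes the $\St{}{}$ blocks the genuine covariance of $(x_1,\dots,x_{t-1})$ after $f_M$ is integrated out with the FITC-style conditional independence. Once the lemma is established, Theorem~\ref{theorem} follows immediately by integrating Eq.~\eqref{eq:lemmain} over $f_M$ and over $x_0$.
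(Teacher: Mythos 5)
Your proposal is correct and follows essentially the same route as the paper: induction on $t$, with the base case and the inductive step both handled by the affine/linear-Gaussian marginalization--conditioning identities applied to $p(x_{t+1}|x_t,f_M)\,q(f_M|x_t,\dots,x_0)$, and the telescoping of $\invSt{0:t-2}$ into $\invSt{0:t-1}$ verified via the block-matrix (Schur-complement) inversion formula. The only caveat is that your ``fixed joint Gaussian'' framing should be read purely as bookkeeping, since the blocks $\St{i}{j}$ depend on the states through the kernel evaluations and no genuine joint Gaussian over $(f_M,x_{1:t})$ exists --- which is exactly why the paper (and your actual argument) proceeds by a recurrent, one-step-at-a-time induction rather than by conditioning a single multivariate Gaussian.
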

With a slight abuse of the slicing notation, we denote $ \left(K_{0:t-1,M}\right)^\top = \begin{pmatrix} K_{0,M}^\top & \cdots & K_{t-1,M}^\top \end{pmatrix}  \in \mathbb{R}^{M \times t}$. 
Before providing the proof of Lem.~\ref{lemma}, we show how this lemma can be used to prove Thm.~\ref{theorem}:

\begin{proof}[\emph{\textbf{Proof of Theorem \ref{theorem}}}]
	\label{thmproof}
	Starting with the definition of $ q(x_t) $ given in Eq.~\eqref{eq:marginal},
	\begin{equation}\label{eq:tproof1}
	q(x_t) = \int q(x_0) q(f_M,x_t,\dots,x_1|x_0) df_M \prod_{t'=0}^{t-1}dx_{t'},
	\end{equation}
	our aim is to show that this can equivalently be written as in Thm.~\ref{theorem}.
	Using Lem.~\ref{lemma}, more specifically Eq.~\eqref{eq:lemmain}, yields
	\begin{equation}\label{eq:tproof2}
	q(x_t) = \left(\int q(f_M|x_t,\dots,x_0) df_M \right) \int q(x_0)  \prod_{t'=1}^{t} q(x_{t'}|x_{t'-1},\dots,x_0) \prod_{t'=0}^{t-1}dx_{t'},
	\end{equation}
	where we pulled the only term depending on $ f_M $ out of the integral. As $ q(f_M|x_t,\dots,x_0) $ is a properly normalized probability density, the first integral equals one. This already completes the proof as the terms $ q(x_{t'}|x_{t'-1},\dots,x_0) $, according to Lem.~\ref{lemma}, have the correct form [Eqs.~\eqref{eq:thmgauss}-\eqref{eq:thmvar}].
\end{proof}

For the proof of Lem.~\ref{lemma} we will need two additional results.
The first result is about affine transformations of multivariate Gaussians: Given two Gaussian distributed variables $ x $ and $ y $ that obey
\begin{equation}\label{eq:gaussaffbefore}
p(x|y) = \gauss{x}{a + Fy}{A},\qquad \text{and} \qquad 
p(y) = \gauss{y}{b}{B},
\end{equation}
the following formulas hold (see e.g.~\citet{schoen2011} for a proof):
\begin{align}
\label{eq:gaussaffafter1}
p(x) &= \gauss{x}{a + Fb}{A +FBF^\top}, \\
\label{eq:gaussaffafter2}
p(y|x) &= \gauss{y}{b + BF^\top \left(A +FBF^\top \right)^{-1} \left[x-(a+Fb)\right]}{B- BF^\top \left(A +FBF^\top \right)^{-1}FB }.
\end{align}
Note that since
\begin{equation}\label{eq:gaussaffjoint}
p(x|y) p(y) = p(x,y) = p(y|x) p(x),
\end{equation}
Eqs.~\eqref{eq:gaussaffafter1} and \eqref{eq:gaussaffafter2} are particularly useful if we wish to rewrite the product of two Gaussian densities that are as in Eq.~\eqref{eq:gaussaffbefore}.
The second result is a well known formula for block matrix inversion:
\begin{equation}
\label{eq:blockinv}
\begin{pmatrix}
A & B \\
C & D
\end{pmatrix}^{-1} = 
\begin{pmatrix}
A^{-1} + A^{-1}B\widetilde{D}^{-1}CA^{-1} & -A^{-1}B\widetilde{D}^{-1} \\
-\widetilde{D}^{-1}CA^{-1} & \widetilde{D}^{-1}
\end{pmatrix},
\end{equation}
where $\widetilde{D} = D - CA^{-1}B$.

\begin{proof}[\emph{\textbf{Proof of Lemma \ref{lemma}}}]
	In the following we will prove the lemma by induction:
\paragraph{Base case}%
	We need to show that Eq.~\eqref{eq:lemmain} holds for $ t=1 $, i.e., that
	\begin{equation}\label{eq:indbase}
	q(f_M,x_1|x_0) = q(f_M|x_1,x_0) q(x_{1}|x_0)
	\end{equation}
	with the terms on the RHS given by Eqs.~\eqref{eq:lemqfM}-\eqref{eq:lemvar} and Eqs.~\eqref{eq:thmgauss}-\eqref{eq:thmvar}, respectively.
	
	In order to do so, we will perform the following steps:
	\begin{enumerate}[i)]
		\item In the first step we will show that Eqs.~\eqref{eq:gaussaffbefore}-\eqref{eq:gaussaffjoint} are applicable, which will enable us to write Eq.~\eqref{eq:indbase} as
		\begin{equation}\label{eq:indbasestep1}
		q(f_M,x_1|x_0) = \bar{q}(f_M|x_1,x_0) \bar{q}(x_{1}|x_0).
		\end{equation}
		\item Next, we will show that 
		\begin{equation}\label{eq:indbasestep2}
		\bar{q}(x_{1}|x_0) = q(x_{1}|x_0).
		\end{equation}
		\item In the final step, we will show that
		\begin{equation}\label{eq:indbasestep3}
		\bar{q}(f_M|x_1,x_0) = q(f_M|x_1,x_0).
		\end{equation}
	\end{enumerate}

	For step i), we start with the definition of $ q(f_M,x_1|x_0) $ in Eq.~\eqref{eq:joint}:
	\begin{align}
	q(f_M,x_1|x_0) &= q(f_M) p(x_{1}|x_0,f_M)\\ \label{eq:indbase1}
	&= \gauss{f_M}{m_M}{S_M} \gauss{x_{1}}{x_{0} + K_{0M}\KMinv f_M}{Q+K_{00}-K_{0M}\KMinv K_{0M}^\top },
	\end{align}
	where we used Eqs.~\eqref{eq:qfM} and \eqref{eq:transition} in the second step.
	Next, we note that the requirements in Eq.~\eqref{eq:gaussaffbefore} are given for the terms above, where we identify $ f_M $ as $ y $ and $ x_1 $ as $ x $.
	Applying Eqs.~\eqref{eq:gaussaffafter1}-\eqref{eq:gaussaffjoint} to Eq.~\eqref{eq:indbase1}, results in
	\begin{equation}\label{eq:indbase2}
	q(f_M,x_1|x_0) = \bar{q}(f_M|x_1,x_0) \bar{q}(x_{1}|x_0)
	\end{equation}
	with yet to be determined means and covariances, which concludes the first step.
	
	In step ii), we start with the second term on the RHS in Eq.~\eqref{eq:indbase2} and use Eqs.~\eqref{eq:gaussaffafter1} and \eqref{eq:indbase1}, yielding
	\begin{align}\label{eq:indbase3}
	\bar{q}(x_{1}|x_0) &=\mathcal{N} \big( {x_1} \vert {x_{0} + K_{0M}\KMinv m_M},
\nonumber \\
&  \quad  \quad  \quad  \quad \
{Q+K_{00}-K_{0m}\KMinv K_{0M}^\top + K_{0m}\KMinv S_M \KMinv K_{0M}^\top} \big) \\
	\label{eq:indbase4}
	&= \gauss{x_1}{\mut{0}}{\St{0}{0}} = \gauss{x_1}{\hat{\mu}_1}{\hat{\Sigma}_1},
	\end{align}
	where we used the definitions in Eqs.~\eqref{eq:thmmean}-\eqref{eq:thmS} in the last line.
	Together with the definition in Eq.~\eqref{eq:thmgauss}, this implies that in fact $ \bar{q}(x_{1}|x_0) = q(x_{1}|x_0)$.
	
	Finally for step iii), using Eqs.~\eqref{eq:gaussaffafter2} and \eqref{eq:indbase1} on the first term on the RHS in Eq.~\eqref{eq:indbase2} results in 
	\begin{align}
	\bar{q}(f_M|x_1,x_0) &=\mathcal{N} \big( { f_M } \vert { m_M + S_M \KMinv K_{0M}^\top \invSt{0}\left(x_1-\mut{0}\right) }, 
\\
&   \quad  \quad  \quad  \quad \ \
{ S_M - S_M \KMinv K_{0M}^\top \invSt{0} K_{0_M} \KMinv S_M } \big) \\
	&= \gauss{f_M}{\hat{\mu}^1_M}{\hat{\Sigma}^1_M}.	
	\end{align}
	In the first line we used that $ a + Fb = \mut{0} $ and that $ A+FBF^\top = \St{0}{0} $ (by comparing Eqs.~\eqref{eq:indbase3} and \eqref{eq:indbase4} with Eq.~\eqref{eq:gaussaffafter1}). Additionally we used the definitions in Eqs.~\eqref{eq:lemmean} and \eqref{eq:lemvar} in the last line.
	Together with the definition in Eq.~\eqref{eq:lemqfM}, this implies that in fact $ \bar{q}(f_M|x_1,x_0) = q(f_M|x_1,x_0)$, concluding step iii) and therefore also the base case of the induction.
	
\paragraph{Inductive step}
	We assume that Lem.~\ref{lemma} holds for some $ t=1,\dots,T-1 $ (induction assumption) and then need to show that it also holds for $ t+1 $.
	That is, assuming that
	\begin{equation}\label{eq:steprepeat}
	q(f_M,x_t,\dots,x_1|x_0) = q(f_M|x_t,\dots,x_0) \prod_{t'=1}^{t} q(x_{t'}|x_{t'-1},\dots,x_0),
	\end{equation}
	holds for some $ t $ with the terms on the RHS given by Eqs.~\eqref{eq:lemqfM}-\eqref{eq:lemvar}, and Eqs.~\eqref{eq:thmgauss}-\eqref{eq:thmvar}, respectively, we need to show that this implies that
	\begin{equation} \label{eq:stepmain}
	q(f_M,x_{t+1},\dots,x_1|x_0) = q(f_M|x_{t+1},\dots,x_0) \prod_{t'=1}^{t+1} q(x_{t'}|x_{t'-1},\dots,x_0),
	\end{equation}
	where the terms are again given by Eqs.~\eqref{eq:lemqfM}-\eqref{eq:lemvar} (but with $ t\to t+1 $), and Eqs.~\eqref{eq:thmgauss}-\eqref{eq:thmvar}, respectively.
	
	The way to show this is very similar to the way we showed the base case, the resulting formulas will only look more complicated and we will need one additional step in the beginning:
	\begin{enumerate}[i)]
		\item[o)] Starting with the LHS of Eq.~\eqref{eq:stepmain} and its definition in Eq.~\eqref{eq:joint}, we can regroup the terms as follows:
		\begin{align}
		q(f_M,x_{t+1},\dots,x_1|x_0) &= q(f_M) \prod_{t'=1}^{t+1} p(x_{t'}|x_{t'-1},f_M) \\
&= p(x_{t+1}|x_{t},f_M) \left(q(f_M) \prod_{t'=1}^{t} p(x_{t'}|x_{t'-1},f_M) \right)\\
		\label{eq:stepo1}
		&= p(x_{t+1}|x_{t},f_M) q(f_M,x_{t},\dots,x_1|x_0),
		\end{align}
		where we identified the terms from Eq.~\eqref{eq:joint} in the last step.
		We can therefore immediately apply the induction assumption [Eq.~\eqref{eq:steprepeat}] to the second term in Eq.~\eqref{eq:stepo1}, resulting in
		\begin{equation}\label{eq:stepo2}
		q(f_M,x_{t+1},\dots,x_1|x_0) = p(x_{t+1}|x_{t},f_M) q(f_M|x_t,\dots,x_0) \prod_{t'=1}^{t} q(x_{t'}|x_{t'-1},\dots,x_0),
		\end{equation}
		where the $ q(x_{t}|x_{t-1},\dots,x_0) $ terms are given by Eqs.~\eqref{eq:thmgauss}-\eqref{eq:thmvar}.
		Comparing this to what we want to show [Eq.~\eqref{eq:stepmain}], we see that it remains to be shown that
		\begin{equation}\label{eq:stepo}
		p(x_{t+1}|x_{t},f_M) q(f_M|x_t,\dots,x_0) = q(f_M|x_{t+1},\dots,x_0)  q(x_{t+1}|x_{t},\dots,x_0),
		\end{equation}
		such that the terms on the RHS are given by Eqs.~\eqref{eq:lemqfM}-\eqref{eq:lemvar}, and Eqs.~\eqref{eq:thmgauss}-\eqref{eq:thmvar}, respectively.
		From this point on, we will have to do the exact same steps as in the base case, which we will repeat below with updated indices.
	\end{enumerate}
	\begin{enumerate}[i)]
		\item In the first step we will show that Eqs.~\eqref{eq:gaussaffbefore}-\eqref{eq:gaussaffjoint} are applicable, which will enable us to write the LHS of Eq.~\eqref{eq:stepo} as
		\begin{equation}\label{eq:stepi}
		p(x_{t+1}|x_{t},f_M) q(f_M|x_t,\dots,x_0) = \bar{q}(f_M|x_{t+1},\dots,x_0)  \bar{q}(x_{t+1}|x_{t},\dots,x_0).
		\end{equation}
		\item Next, we will show that 
		\begin{equation}\label{eq:stepii}
		\bar{q}(x_{t+1}|x_{t},\dots,x_0) = q(x_{t+1}|x_{t},\dots,x_0).
		\end{equation}
		\item In the final step, we will show that
		\begin{equation}\label{eq:stepiii}
		\bar{q}(f_M|x_{t+1},\dots,x_0) = q(f_M|x_{t+1},\dots,x_0).
		\end{equation}
	\end{enumerate}

	Let us start with step i): We can use the definition in Eq.~\eqref{eq:transition} and Eqs.~\eqref{eq:lemqfM}-\eqref{eq:lemvar} (as part of the induction assumption) to write the terms on the LHS of Eq.~\eqref{eq:stepi} as
	\begin{align}\label{eq:stepi1p}
	p(x_{t+1}|x_{t},f_M)  &= \gauss{x_{t+1}}{x_{t} + K_{tM}\KMinv f_M}{Q+K_{tt}-K_{tM}\KMinv K_{tM}^\top } \\ \label{eq:stepi1q}
	q(f_M|x_t,\dots,x_0) &= \begin{aligned}[t]
	\mathcal{N} \left(f_M \middle| \vphantom{\invSt{0:t-1}} \right. &m_M + S_M \KMinv \left(K_{0:t-1,M}\right)^\top \invSt{0:t-1} \left( x_{1:t} - \mut{0:t-1} \right), \\
	&S_M - S_M \KMinv \left(K_{0:t-1,M}\right)^\top \invSt{0:t-1} K_{0:t-1,M} \KMinv S_M \left. \vphantom{\invSt{0:t-1}} \right)
	\end{aligned}
	\end{align}
	Next, we note that the requirements in Eq.~\eqref{eq:gaussaffbefore} are given for the terms above, where we identify $ f_M $ as $ y $ and $ x_{t+1} $ as $ x $.
	Applying Eqs.~\eqref{eq:gaussaffafter1}-\eqref{eq:gaussaffjoint} to Eqs.~\eqref{eq:stepi1p} and \eqref{eq:stepi1q}, allows us to write the LHS of Eq.~\eqref{eq:stepi} as 
	\begin{equation}\label{eq:stepi2}
	p(x_{t+1}|x_{t},f_M) q(f_M|x_t,\dots,x_0) = \bar{q}(f_M|x_{t+1},\dots,x_0)  \bar{q}(x_{t+1}|x_{t},\dots,x_0)
	\end{equation}
	with yet to be determined means and covariances. This concludes the first step.
	
	For step ii), we examine the second term on the RHS of Eq.~\eqref{eq:stepi2}, which can be obtained using Eqs.~\eqref{eq:gaussaffbefore} and \eqref{eq:gaussaffafter1} applied to Eqs.~\eqref{eq:stepi1p} and \eqref{eq:stepi1q}:
	\begin{equation}\label{eq:stepii1}
	\bar{q}(x_{t+1}|x_{t},\dots,x_0) = \gauss{ x_{t+1} }{ \bar{m}_{t+1} }{ \bar{\Sigma}_{t+1} }.
	\end{equation}
	The mean is given by
	\begin{align}
	\bar{m}_{t+1} &= x_t +  K_{tM}\KMinv \left[m_M + S_M \KMinv \left(K_{0:t-1,M}\right)^\top \invSt{0:t-1} \left( x_{1:t} - \mut{0:t-1} \right)\right] \\
	&= \mut{t} + \St{t}{0:t-1} \invSt{0:t-1} \left(x_{1:t} - \mut{0:t-1}\right) = \hat{\mu}_{t+1},
	\end{align}
	where we used the definitions in Eqs.~\eqref{eq:thmmean}, \eqref{eq:thmmu}, and \eqref{eq:thmS} in the second line. The covariance is given by 
	\begin{align}
	\bar{\Sigma}_{t+1} &= Q+K_{tt}-K_{tM}\KMinv K_{tM}^\top   \\
	& \quad +  K_{tM}\KMinv \left[S_M - S_M \KMinv \left(K_{0:t-1,M}\right)^\top \invSt{0:t-1} K_{0:t-1,M} \KMinv S_M\right] \KMinv K_{tM}^\top \\
	\label{eq:stepii2}
	&= \St{t}{t} - \St{t}{0:t-1} \invSt{0:t-1} \St{0:t-1}{t}  = \hat{\Sigma}_{t+1},
	\end{align}
	where we used the definitions in Eqs.~\eqref{eq:thmvar}-\eqref{eq:thmS} in the last line.
	Taken together, Eqs.~\eqref{eq:stepii1}-\eqref{eq:stepii2} state that $ \bar{q}(x_{t+1}|x_{t},\dots,x_0) $ is a Gaussian
	with mean $ \hat{\mu}_{t+1} $ and covariance $ \hat{\Sigma}_{t+1} $,
	i.e., that $ \bar{q}(x_{t+1}|x_{t},\dots,x_0) = q(x_{t+1}|x_{t},\dots,x_0) $.
	This concludes the second step.
	
	For the last step, step iii), we consider the first term on the RHS of Eq.~\eqref{eq:stepi2}, which can be obtained using Eqs.~\eqref{eq:gaussaffbefore} and \eqref{eq:gaussaffafter2} applied to Eqs.~\eqref{eq:stepi1p} and \eqref{eq:stepi1q}:
	\begin{equation}\label{eq:stepiii1}
	\bar{q}(f_M|x_{t+1},\dots,x_0) = \gauss{ f_M }{ \bar{\mu}^{t+1}_M }{ \bar{\Sigma}^{t+1}_M }.
	\end{equation}
	As in the previous step, it remains to be shown that the mean and covariance coincide with $ \hat{\mu}^{t+1}_M $ and $ \hat{\Sigma}^{t+1}_M $ given in Eq.~\eqref{eq:lemmean} and \eqref{eq:lemvar}, respectively.
	Showing this requires exactly the same steps (although with different quantities) as in Eqs.~(63)-(68) in the appendix of~\citet{lindinger2020}, so we will only sketch the derivation exemplarily for $ \bar{\mu}^{t+1}_M $ here:
	Starting from
	\begin{equation}\label{eq:stepiii2}
	\bar{\mu}^{t+1}_M = \begin{aligned}[t]
	&m_M + S_M \KMinv \left(K_{0:t-1,M}\right)^\top \invSt{0:t-1} \left( x_{1:t} - \mut{0:t-1} \right) +\\
	&\left[S_M - S_M \KMinv \left(K_{0:t-1,M}\right)^\top \invSt{0:t-1} K_{0:t-1,M} \KMinv S_M\right] \times \\
	&\KMinv K_{tM}^\top \hat{\Sigma}_{t+1}^{-1} \left[ x_{t+1} -\mut{t} - \St{t}{0:t-1} \invSt{0:t-1} \left(x_{1:t} - \mut{0:t-1}\right) \right],
	\end{aligned}  
	\end{equation}
	where $ \hat{\Sigma}_{t+1}^{-1} $ is as in Eq.~\eqref{eq:stepii2}, we can reorder the terms until we get to a point where we can apply the block matrix inversion formula [Eq.~\eqref{eq:blockinv}] backwards.
	In order to do so, we identify the terms of Eq.~\eqref{eq:blockinv} as
	\begin{equation}
	A = \St{0:t-1}{0:t-1}, \qquad B = \St{0:t-1}{t}, \qquad C = \St{t}{0:t-1}, \qquad
	\widetilde{D} = \hat{\Sigma}_{t+1}^{-1},
	\end{equation}
	and therefore $ D = \St{t}{t} $ (by comparing the definition of $ \widetilde{D} $ after Eq.~\eqref{eq:blockinv} with  $ \hat{\Sigma}_{t+1}^{-1} $ in Eq.~\eqref{eq:stepii2}).
	This results (after several steps) in
	\begin{align}
	\bar{\mu}^{t+1}_M &= m_M + S_M \KMinv \begin{pmatrix}
	\left(K_{0:t-1,M}\right)^\top & K_{tM}^\top
	\end{pmatrix}\begin{pmatrix}
	\St{0:t-1}{0:t-1} & \St{0:t-1}{t} \\[2mm]
	\St{t}{0:t-1} & \St{t}{t}
	\end{pmatrix}^{-1}
	\begin{pmatrix}
	x_{1:t} - \mut{0:t-1} \\[2mm] x_{t+1} -\mut{t}
	\end{pmatrix} \\
	&= m_M + S_M \KMinv \left(K_{0:t,M}\right)^\top \invSt{0:t}
	\left( x_{1:t+1} - \mut{0:t} \right) = \hat{\mu}_M^{t+1},
	\end{align}
	where we first rewrote the first line according to our slicing notation and then applied the definition in Eq.~\eqref{eq:lemmean} in the last step.
	Doing the same for the covariance, we can similarly show that $ \bar{\Sigma}^{t+1}_M = \hat{\Sigma}_M^{t+1} $.
	
	Hence $ \bar{q}(f_M|x_{t+1},\dots,x_0) = q(f_M|x_{t+1},\dots,x_0) $.
	This concludes step iii) as well as the induction step and therefore the proof of Lem.~\ref{lemma}.
	
\end{proof}

\newpage

\section{Relationship between Gaussian Process State-Space Models and Stochastic Differential Equations}
\label{sec:appxsde}
\subsection{Problem Statement}
In the following, we revisit the analytical marginalization of the inducing outputs for the GPSSM using the stochastic differential equation (SDE) formulation thereof.
We assume that the frequency of the original time series (corresponding to time steps $ \Delta_t $) is sufficiently small such that one Euler–Maruyama step between $R$ observations is enough.

Our main goal in this chapter is to find a setting of the variational and model parameters 
such that the marginals of the latent states that are needed to maximize the evidence lower bound are consistent between the SSM and the SDE formulation when choosing $R=1$.
As a by-product, we obtain analytical formulae for the marginals of the latent state for the general case ($R \geq 1$).

\subsection{Overview}
\label{sec:appxsdeOverview}
We first find an analytical formula for the marginalization over the inducing outputs $f_M$ in the SDE formulation of our problem.
Then, we proceed by showing that 
these formulae are consistent with the ones of the GPSSM formulation that we obtained in the previous session (supplementary material~\ref{sec:appxgpssm}). 
This consistency is further passed on to the evidence lower bound as we show in the final part of this Section.
All proofs are given in Section~\ref{appx:sde_proofs}.

Furthermore, we are
interested in this paper in the special case of the SDE formulation where we only consider constant step sizes
 $ R\Delta_t $, where $ R \geq 1 $ is an integer. 
In order to clearly distinguish the notation from the previous problem in Sec.~\ref{sec:appxgpssm}, we mark all (potentially different) quantities with a $ \Delta $.
Additionally, we use an index $ j $ to denote the time indices: Whereas before (in Sec.~\ref{sec:appxgpssm}), a time index $ t $ indicated a time $ t\Delta_t $ after the starting time, the index $ j $ signifies a time $ jR\Delta_t $ after the starting time.

As starting point, we study the marginals of the latent state of the variational posterior, $ q^\Delta (x_j) $, for all time indices $ j \in {1,\dots, J} $, where $ J = T/R$. Repeating Eqs.~\eqref{eq:marginal}-\eqref{eq:transition} for the SDE formulation, the marginal can be obtained as
\begin{equation}\label{eq:sde_marginal}
q^\Delta (x_j) = \int q^\Delta (x_0) q^\Delta (f_M,x_j,\dots,x_1|x_0) df_M \prod_{j'=0}^{j-1}dx_{j'},
\end{equation}
where
\begin{align}\label{eq:sde_joint}
q^\Delta (f_M,x_j,\dots,x_1|x_0)
&= q^\Delta (f_M) \prod_{j'=0}^{j-1} p^\Delta (x_{j'+1}|x_{j'},f_M),\\
\label{eq:sde_qx0}
q^\Delta (x_0) &= \gauss{x_0}{m_0^\Delta }{S_0^\Delta },\\
\label{eq:sde_qfM}
q^\Delta (f_M) &= \gauss{f_M}{m_M^\Delta }{S_M^\Delta },\\
\label{eq:sde_transition}
p^\Delta (x_{j+1}|x_{j},f_M) &=
 \mathcal{N} \Big( { x_{j+1} } \Big| {x_{j}+ R\Delta_t K_{jM}^\Delta \KMdelinv f_M},
\nonumber \\
& \quad \quad \quad \quad \quad  \
{ R\Delta_t Q^\Delta + (R \Delta_t)^2 \left[ K^\Delta_{jj} - K^\Delta_{jM}\KMdelinv \KdelT{jM} \right] } \Big),
\end{align}
where we have obtained Eq.~\eqref{eq:sde_transition} by analytical marginalization of the local latent variable $f_j$:
\begin{align}
\label{eq:sde_marg_fj}
p^\Delta (x_{j+1} \vert x_{j}, f_M)&= \int p^\Delta (x_{j+1}|x_{j},f_j)  p^\Delta(f_j \vert f_M) df_j, \quad \text{with}  \\
p^\Delta(f_j \vert f_M) &=  \mathcal{N}(f_j \vert 
 K_{jM}^\Delta \KMdelinv f_M, 
K^\Delta_{jj} - K^\Delta_{jM}\KMdelinv \KdelT{jM}) \\
\label{eq:sde_trans_fj}
p^\Delta(x_{j+1} \vert x_j, f_j) &=  \mathcal{N}(x_{j+1} \vert x_j +R \Delta_t f_j,R \Delta_t Q^\Delta). 
\end{align}
As before, $ m_0^\Delta $, $ S_0^\Delta $, $ m_M^\Delta $, and $ S_M^\Delta $ are variational parameters,
and $ Q^\Delta $ is a model parameter. We additionally defined $ K^\Delta _{jj} = k^\Delta(x_j,x_j) $, and similarly for $ K_{jM}^\Delta $ and $ K_{MM}^\Delta $, where $ k^\Delta(\cdot ,\cdot) $ is a kernel or covariance function.
The main difference of the two formulations can be seen in Eqs.~\eqref{eq:transition} and \eqref{eq:sde_transition}, where the latter has an additional dependency on $ R\Delta_t $ in the mean and the variance.
In the following, we provide the analytical formulas for the marginalization of the inducing outputs $ f_M $ from Eq.~\eqref{eq:sde_marginal}:
\begin{theorem} \label{sde_theorem}
	Using the variational posterior of the SDE formulation of the GP SSM as defined in Eqs.~\eqref{eq:sde_marginal}-\eqref{eq:sde_transition}
the marginals of the latent state at time point $ j\in \{1,\dots,J\} $, can be obtained as
	\begin{equation}\label{eq:sde_thmmain}
	q^\Delta(x_j) = \int q^\Delta(x_0) \left[ \prod_{j'=1}^{j} q^\Delta(x_{j'}|x_{j'-1},\dots,x_0) \right] \prod_{j'=0}^{j-1} dx_{j'},
	\end{equation}
	where all terms are Gaussian:
	\begin{align} \label{eq:sde_thmgauss}
	q^\Delta(x_{j}|x_{j-1},\dots,x_0) &= \gauss{x_j}{\hat{\mu}_j}{\hat{\Sigma}_j},\\
	\label{eq:sde_thmmean}
	\hat{\mu}_j &= \mut{j-1} + \St{j-1}{0:j-2} \invSt{0:j-2} \left( x_{1:j-1} - \mut{0:j-2} \right),\\
	\label{eq:sde_thmvar}
	\hat{\Sigma}_j &= \St{j-1}{j-1} - \St{j-1}{0:j-2} \invSt{0:j-2} \St{0:j-2}{j-1}.
	\end{align}
	Here, the terms are given by
	\begin{align}
	\label{eq:sde_thmmu}
	\mut{j} &= x_j + R \Delta_t K^\Delta_{jM}\KMinvDelta m^\Delta_M,\\
\label{eq:sde_thmS}
	\St{j}{j'} &=(R \Delta_t)^2 K^\Delta_{jM}\KMinvDelta S^\Delta_M \KMinvDelta {K^\Delta_{j'M}}^\top
	\nonumber \\
	& \quad + \delta_{jj'}  \left[R \Delta_t Q^\Delta + (R \Delta_t)^2 \left( K^\Delta_{jj} - K^\Delta_{jM} \KMinvDelta {K^\Delta_{j'M}}^\top \right)\right].
	\end{align}
\end{theorem}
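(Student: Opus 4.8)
The plan is to mirror the proof of Theorem~\ref{theorem}: first establish an SDE analogue of Lemma~\ref{lemma}, namely that
$q^\Delta(f_M,x_j,\dots,x_1|x_0) = q^\Delta(f_M|x_j,\dots,x_0)\prod_{j'=1}^{j} q^\Delta(x_{j'}|x_{j'-1},\dots,x_0)$,
with all factors Gaussian, the $q^\Delta(x_{j'}|\cdot)$ as in Eqs.~\eqref{eq:sde_thmgauss}--\eqref{eq:sde_thmS} and $q^\Delta(f_M|x_j,\dots,x_0)=\gauss{f_M}{\hat\mu^j_M}{\hat\Sigma^j_M}$ for suitable $R\Delta_t$-decorated analogues of Eqs.~\eqref{eq:lemmean}--\eqref{eq:lemvar}. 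Theorem~\ref{sde_theorem} then follows by substituting this factorization into Eq.~\eqref{eq:sde_marginal}, pulling the only $f_M$-dependent factor out of the integral, and using that $\int q^\Delta(f_M|x_j,\dots,x_0)\,df_M = 1$ — exactly as Theorem~\ref{theorem} was derived from Lemma~\ref{lemma}.

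The key observation that makes this nearly automatic is structural. The SDE transition kernel \eqref{eq:sde_transition} has the same functional form as the GPSSM kernel \eqref{eq:transition}, namely $\mathcal{N}(x_{j+1}\,|\,x_j + F^\Delta f_M,\,A^\Delta)$ with $F^\Delta = R\Delta_t\,K^\Delta_{jM}\KMdelinv$ and $A^\Delta = R\Delta_t Q^\Delta + (R\Delta_t)^2[K^\Delta_{jj}-K^\Delta_{jM}\KMdelinv\KdelT{jM}]$, together with $q^\Delta(f_M)=\gauss{f_M}{m_M^\Delta}{S_M^\Delta}$. The induction proving Lemma~\ref{lemma} only uses this abstract form plus the generic affine-Gaussian identities \eqref{eq:gaussaffbefore}--\eqref{eq:gaussaffjoint} and the block-inversion formula \eqref{eq:blockinv}; it never exploits how $F$ or $A$ depend on the kernel. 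Hence one can either rerun that induction verbatim with $F\to F^\Delta$, $A\to A^\Delta$, $m_M\to m_M^\Delta$, $S_M\to S_M^\Delta$, or simply invoke Theorem~\ref{theorem} under this substitution; in either case one checks that Eqs.~\eqref{eq:thmmu}--\eqref{eq:thmS} turn into Eqs.~\eqref{eq:sde_thmmu}--\eqref{eq:sde_thmS}. The only genuinely new ingredient is the preliminary marginalization of the per-step variable $f_j$ in Eqs.~\eqref{eq:sde_marg_fj}--\eqref{eq:sde_trans_fj} that produces \eqref{eq:sde_transition}, which is a one-line Gaussian convolution.

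Concretely I would proceed in this order. (i) State the SDE version of Lemma~\ref{lemma} with the $R\Delta_t$-decorated $\hat\mu^j_M$, $\hat\Sigma^j_M$. (ii) Base case $j=1$: expand $q^\Delta(f_M)\,p^\Delta(x_1|x_0,f_M)$, apply \eqref{eq:gaussaffafter1}--\eqref{eq:gaussaffafter2} with $f_M$ playing the role of $y$ and $x_1$ of $x$, and read off $\gauss{x_1}{\mut{0}}{\St{0}{0}}$ and $\gauss{f_M}{\hat\mu^1_M}{\hat\Sigma^1_M}$. (iii) Inductive step: regroup $q^\Delta(f_M,x_{j+1},\dots,x_1|x_0)=p^\Delta(x_{j+1}|x_j,f_M)\,q^\Delta(f_M,x_j,\dots,x_1|x_0)$, apply the induction hypothesis, then apply the affine-Gaussian identities to $p^\Delta(x_{j+1}|x_j,f_M)\,q^\Delta(f_M|x_j,\dots,x_0)$ and use \eqref{eq:blockinv} backwards to fold the new row and column into $\invSt{0:j}$, as in the steps of the proof of Lemma~\ref{lemma} (cf. the cited appendix of \citet{lindinger2020}). (iv) Integrate out $f_M$ to obtain Theorem~\ref{sde_theorem}.

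The main obstacle is clerical rather than conceptual: keeping the $R\Delta_t$ versus $(R\Delta_t)^2$ prefactors straight, since $F^\Delta$ enters means with one factor of $R\Delta_t$ and covariances with two, while the process-noise piece $R\Delta_t Q^\Delta$ carries only one — so the diagonal term of $\St{j}{j}$ is genuinely a sum of an $R\Delta_t$-term and an $(R\Delta_t)^2$-term rather than a uniformly rescaled copy of the GPSSM expression. Once one notes that the proof of Lemma~\ref{lemma} treats the transition covariance as a single opaque matrix $A^\Delta$ and never needs to split it, this difficulty disappears and the argument reduces to careful relabeling.
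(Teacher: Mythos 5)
Your proposal is correct and follows essentially the same route as the paper: the paper proves Theorem~\ref{sde_theorem} by first establishing Lemma~\ref{sde_lemma} (the $R\Delta_t$-decorated analogue of Lemma~\ref{lemma}, proved by the same induction with care for the $R\Delta_t$ versus $(R\Delta_t)^2$ prefactors) and then repeating the two-line integration argument of Theorem~\ref{theorem}. Your observation that the induction only sees the transition kernel through the abstract affine-Gaussian form $\mathcal{N}(x_{j+1}\,|\,x_j + F^\Delta f_M, A^\Delta)$ is exactly why the paper can dispatch the lemma's proof by relabeling.
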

This theorem gives a general formula for the marginals $ q^\Delta(x_j) $.
We use this general result to connect the state space model and the SDE formulation of our problem.

We proceed by searching for a setting of the variational and model parameters such that these analytical marginals are consistent with the result in Eq.~\eqref{eq:thmmain}.
By this we mean that if the time steps in both approaches are equal (which is the case if we set $ R=1 $) we want the marginals of the latent state at the same time indices to be equal, i.e.
\begin{equation}\label{eq:consistency}
R=1 \land j = t \Rightarrow q^\Delta(x_j) = q(x_t).
\end{equation}
The relation between the parameters of the SDE and the standard formulation necessary to achieve the consistency in Eq.~\eqref{eq:consistency} are
provided in following corollary.

\begin{corollary}
 \label{eqi_marg_cor}
	Using the variational posterior of the SDE formulation of the GP SSM as defined in Eqs.~\eqref{eq:sde_marginal}-\eqref{eq:sde_transition} and setting,
	\begin{gather}\label{eq:sde_thmsetting}
	m_0^\Delta = m_0, \quad S_0^\Delta=S_0, \quad  m_M^\Delta =  m_M, \quad S_M^\Delta = S_M , \quad Q^\Delta = Q/(R \Delta_t), \\
	\label{eq:k1}
	 k^\Delta(x_m ,x_{m'})= k(x_m ,x_m'), \quad \quad \quad \quad
	  k^\Delta(x_j ,x_{j'})= k(x_j ,x_{j'})/(R \Delta_t)^2 \\ 
	  \label{eq:k2}
	 k^\Delta(x_m ,x_{j})= k(x_m , x_j)/(R \Delta_t), \quad 
	 k^\Delta(x_j ,x_{m})= k(x_j , x_m)/(R \Delta_t),  
	\end{gather} 
the marginals in Eqs.~\eqref{eq:thmmain} and \eqref{eq:sde_thmmain} are equal for $R=1$ and $j=t$.
\end{corollary}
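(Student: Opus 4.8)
The plan is to derive Corollary~\ref{eqi_marg_cor} from Theorem~\ref{sde_theorem} by pure substitution. The key observation is that the SDE marginal in Eqs.~\eqref{eq:sde_thmmain}--\eqref{eq:sde_thmvar} is assembled from exactly the same recursion as the GPSSM marginal in Eqs.~\eqref{eq:thmmain}--\eqref{eq:thmvar}: both express $q(x_t)$ (resp.\ $q^\Delta(x_j)$) as an integral over $q(x_0)$ (resp.\ $q^\Delta(x_0)$) of a product of Gaussian conditionals whose means $\hat\mu$ and covariances $\hat\Sigma$ are built from the blocks $\mut{\cdot}$ and $\St{\cdot}{\cdot}$ in one and the same functional form. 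Hence it suffices to check that, for $R=1$ and under the identifications \eqref{eq:sde_thmsetting}--\eqref{eq:k2}, one has $q^\Delta(x_0)=q(x_0)$ and that the SDE building blocks \eqref{eq:sde_thmmu}--\eqref{eq:sde_thmS} collapse to the GPSSM ones \eqref{eq:thmmu}--\eqref{eq:thmS} under $j\mapsto t$; equality of the marginals then follows by a trivial induction on the shared recursion.

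First I would rewrite the scalar kernel identities \eqref{eq:k1}--\eqref{eq:k2} in matrix form at $R=1$ (so the common step size is $\Delta_t$): $K^\Delta_{MM}=K_{MM}$, hence $\KMinvDelta=\KMinv$; $K^\Delta_{jM}=K_{jM}/\Delta_t$; and $K^\Delta_{jj}=K_{jj}/\Delta_t^2$. Combined with $m^\Delta_M=m_M$, $S^\Delta_M=S_M$, $Q^\Delta=Q/\Delta_t$, $m_0^\Delta=m_0$, $S_0^\Delta=S_0$, this is all that enters Eqs.~\eqref{eq:sde_thmmu}--\eqref{eq:sde_thmS}. Then the computation is immediate: for the mean, $\mut{j}=x_j+\Delta_t\,(K_{jM}/\Delta_t)\KMinv m_M = x_j + K_{jM}\KMinv m_M$, which is Eq.~\eqref{eq:thmmu}; for the covariance, the low-rank term becomes $\Delta_t^2\,(K_{jM}/\Delta_t)\KMinv S_M\KMinv(K_{j'M}/\Delta_t)^\top = K_{jM}\KMinv S_M\KMinv K_{j'M}^\top$, while the Kronecker term becomes $\Delta_t\,(Q/\Delta_t) + \Delta_t^2\big(K_{jj}/\Delta_t^2 - (K_{jM}/\Delta_t)\KMinv(K_{j'M}/\Delta_t)^\top\big) = Q + K_{jj} - K_{jM}\KMinv K_{j'M}^\top$, and summing gives Eq.~\eqref{eq:thmS}. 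Finally $q^\Delta(x_0)=\gauss{x_0}{m_0^\Delta}{S_0^\Delta}=\gauss{x_0}{m_0}{S_0}=q(x_0)$. Feeding these identities into the recursion — formally, an induction over $j=t$ showing $q^\Delta(x_j|x_{j-1},\dots,x_0)=q(x_t|x_{t-1},\dots,x_0)$ because $\hat\mu$ and $\hat\Sigma$ are the same functions of the same inputs — yields $q^\Delta(x_j)=q(x_t)$, which is exactly Eq.~\eqref{eq:consistency}.

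The substantive work lies entirely in Theorem~\ref{sde_theorem}, which I would prove by induction in direct analogy with Lemma~\ref{lemma}, marginalizing $f_M$ step by step via the affine-Gaussian identities \eqref{eq:gaussaffbefore}--\eqref{eq:gaussaffjoint} and the block-inversion formula \eqref{eq:blockinv}, now carrying along the extra $R\Delta_t$ factors introduced by Eq.~\eqref{eq:sde_transition}; granting that theorem, the corollary is just bookkeeping. The one step that needs genuine care is the piecewise scaling of $k^\Delta$ in \eqref{eq:k1}--\eqref{eq:k2}: the kernel is rescaled by a different power of $\Delta_t$ according to how many of its two arguments are inducing locations versus latent states, and one must verify that every kernel block appearing in Theorem~\ref{sde_theorem} is rescaled by precisely the power of $\Delta_t$ that cancels the explicit $R\Delta_t$ and $(R\Delta_t)^2$ prefactors — in particular that the cross-block $K^\Delta_{jM}$ carries a single factor $1/\Delta_t$, so the quadratic forms $K^\Delta_{jM}\KMinvDelta(\cdot)$ acquire $1/\Delta_t^2$ and exactly cancel the $(R\Delta_t)^2$ in \eqref{eq:sde_thmS}.
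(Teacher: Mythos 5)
Your proposal is correct and follows essentially the same route as the paper: substitute the parameter identifications into the SDE marginal, check $q^\Delta(x_0)=q(x_0)$ and that the building blocks $\mut{j}$, $\St{j}{j'}$ collapse to their GPSSM counterparts, and conclude recursively that the conditionals and hence the marginals coincide. You simply carry out the cancellation of the $R\Delta_t$ factors more explicitly than the paper's terse proof does, which is a welcome addition rather than a deviation.
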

Note that the rescaling of the kernel function $k(\cdot, \cdot)$ in Eqs.~\eqref{eq:k1} and~\eqref{eq:k2}, is often done in multi-output learning, or more generally, can also be interpreted as a simple form of interdomain Gaussian Processes~\citep{gredilla2009}.

The evidence lower bound of the discretized stochastic differential equation can be computed as 
\begin{eqnarray}
\label{eq:sde_elbo1}
\mathcal{L}_{\Delta}
= - \text{KL}(q^\Delta(x_0) \Vert p^\Delta(x_0)) - \text{KL}( q^\Delta(f_M) \Vert p^\Delta(f_M)) + \sum_{j=1}^J  \mathbb{E}_{q^\Delta(x_j)} \left[ \log p^\Delta(y_j \vert x_j) \right].
\end{eqnarray}
with 
\begin{eqnarray}
 p^\Delta(x_0) &=& \mathcal{N}(x_0 \vert \mu^\Delta_0, Q_0^\Delta) 
 \\
 p^\Delta(f_M) &=&  \mathcal{N}(f_M \vert 0, K^{\Delta}_{MM}) 
 \\
  p^\Delta(y_j \vert x_j) &=&  \mathcal{N}(y_j \vert  C^\Delta x_j, \Omega^\Delta),
\end{eqnarray}
where $\mu_0^\Delta, Q_0^\Delta, C^\Delta, \Omega^\Delta$ are additional model parameters, and the 
marginals of the variational posterior $q^\Delta(x_j)$ are defined in Eq.~\eqref{eq:sde_thmmain}.
This can be contrasted to the evidence lower bound of the state-space model formulation, that is given by
\begin{eqnarray}
\label{eq:gpssm_elbo}
\mathcal{L}
= - \text{KL}(q(x_0) \Vert p(x_0)) - \text{KL}( q(f_M) \Vert p(f_M)) +
\sum_{t=1}^T  \mathbb{E}_{q(x_t)} \left[ \log p(y_t \vert x_t) \right],
\end{eqnarray}
where the variational factors are given by Eqs.~\eqref{eq:qx0},~\eqref{eq:qfM},~\eqref{eq:thmmain}. 
Similarly as in Corollary~\ref{eqi_marg_cor}, we are again interested in the setting of the variational and model parameters such that the lower bounds in Eq.~\eqref{eq:sde_elbo1} and Eq.~\eqref{eq:gpssm_elbo}
match, i.e. $\mathcal{L}= \mathcal{L}_{\Delta}$.

The neccessary conditions are given in our final corollary.
\begin{corollary}
 \label{eqi_elbo_cor}
 Setting
\begin{eqnarray}
\label{eq:sde_elbosetting}
\mu_0^\Delta=\mu_0, \quad Q_0^\Delta=Q_0, \quad C^\Delta=C, \quad \Omega^\Delta=\Omega,
\end{eqnarray}
in addition to the settings in Cor.~\ref{eqi_marg_cor} [Eqs.~\eqref{eq:sde_thmsetting}-~\eqref{eq:k2}], the lower bounds
in Eq.~\eqref{eq:sde_elbo1} and Eq.~\eqref{eq:gpssm_elbo} are equal for $R=1$.
\end{corollary}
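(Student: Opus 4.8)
The plan is to prove the corollary by matching the two evidence lower bounds term by term. Both $\mathcal{L}$ [Eq.~\eqref{eq:gpssm_elbo}] and $\mathcal{L}_\Delta$ [Eq.~\eqref{eq:sde_elbo1}] decompose into two KL contributions -- one over the initial state, one over the inducing outputs -- plus a sum of expected log-likelihoods. Since $R=1$ implies $J = T/R = T$, the two sums run over the same index set, so it suffices to show that each of the three groups of terms agrees under the stated parameterization, identifying $j$ with $t$ throughout.

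First I would dispatch the two KL terms, which is immediate. Under the settings of Cor.~\ref{eqi_marg_cor} we have $q^\Delta(x_0) = \mathcal{N}(x_0 \mid m_0, S_0) = q(x_0)$, and since additionally $\mu_0^\Delta = \mu_0$ and $Q_0^\Delta = Q_0$, also $p^\Delta(x_0) = \mathcal{N}(x_0 \mid \mu_0, Q_0) = p(x_0)$; hence $\text{KL}(q^\Delta(x_0) \Vert p^\Delta(x_0)) = \text{KL}(q(x_0) \Vert p(x_0))$. Likewise $q^\Delta(f_M) = \mathcal{N}(f_M \mid m_M, S_M) = q(f_M)$, and because $k^\Delta(x_m, x_{m'}) = k(x_m, x_{m'})$ [Eq.~\eqref{eq:k1}] we get $K_{MM}^\Delta = K_{MM}$, so $p^\Delta(f_M) = \mathcal{N}(f_M \mid 0, K_{MM}) = p(f_M)$ and the inducing-output KL terms coincide as well.

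Next I would treat the expected-log-likelihood terms. For $j = t$, Cor.~\ref{eqi_marg_cor} already gives $q^\Delta(x_j) = q(x_t)$ -- this is the substantive input, and it rests on Thm.~\ref{sde_theorem} and Thm.~\ref{theorem}. Moreover, with $C^\Delta = C$ and $\Omega^\Delta = \Omega$ the emission density $p^\Delta(y_j \mid x_j) = \mathcal{N}(y_j \mid C x_j, \Omega)$ coincides, as a function of the latent state, with $p(y_t \mid x_t)$, and the observation at index $j$ equals the one at index $t$ when $j = t$. Therefore $\mathbb{E}_{q^\Delta(x_j)}[\log p^\Delta(y_j \mid x_j)] = \mathbb{E}_{q(x_t)}[\log p(y_t \mid x_t)]$ for every $t$, with no further computation required, since it is an expectation of the same integrand against the same measure. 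Summing over $t = 1, \dots, T = J$ and combining with the KL identities yields $\mathcal{L} = \mathcal{L}_\Delta$.

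The main obstacle is not in this corollary but was already resolved upstream: establishing that the analytically marginalized latent marginals of the two formulations agree (Cor.~\ref{eqi_marg_cor}), which itself required the recursive, induction-based marginalization of the inducing outputs in Thm.~\ref{sde_theorem}. Conditioned on that, the present statement is essentially bookkeeping; the only points needing care are (i) checking that $J = T$ when $R = 1$ so the likelihood sums align index-for-index, (ii) confirming that no parameter entering $q^\Delta(x_j)$ is left unconstrained by the settings of Cor.~\ref{eqi_marg_cor}, and (iii) noting that the kernel rescalings in Eqs.~\eqref{eq:k1}--\eqref{eq:k2} are exactly compensated by the explicit $R\Delta_t$ factors in Eq.~\eqref{eq:sde_transition}, so that at $R\Delta_t = 1$ all these factors are unity and the two transition kernels -- hence the two posteriors and all downstream quantities -- coincide.
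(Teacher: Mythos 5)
Your proof is correct and follows essentially the same route as the paper's: the KL terms are matched directly under the stated parameter identifications, and Cor.~\ref{eqi_marg_cor} is invoked to equate the marginals $q^\Delta(x_j)=q(x_t)$ so that the $J=T$ expected log-likelihood summands coincide term by term. One small slip in your closing remark: the kernel rescalings cancel the explicit $R\Delta_t$ factors in Eq.~\eqref{eq:sde_transition} for \emph{any} value of $R\Delta_t$, not only when $R\Delta_t=1$; the condition $R=1$ is needed only so that the two time grids and index sets align.
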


\subsection{Proofs}
\label{appx:sde_proofs}
In order to prove Thm.~\ref{sde_theorem}, we require an additional lemma, similar to Lem.~\ref{lemma}:
\begin{lemma} \label{sde_lemma}
	The term $ q^\Delta(f_M,x_j,\dots,x_1|x_0) $ in Eq.~\eqref{eq:sde_joint} can also be written as
	\begin{equation}\label{eq:sde_lemmain}
	q^\Delta(f_M,x_j,\dots,x_1|x_0) = q^\Delta(f_M|x_j,\dots,x_0) \prod_{j'=1}^{j} q^\Delta(x_{j'}|x_{j'-1},\dots,x_0),
	\end{equation}
	for $ j\in \{1,\dots,J\} $. Here, the $ q^\Delta(x_{j}|x_{j-1},\dots,x_0) $ are as in Eqs.~\eqref{eq:sde_thmgauss}-\eqref{eq:sde_thmvar} and
	\begin{align}\label{eq:sde_lemqfM}
	q^\Delta(f_M|x_j,\dots,x_0) &= \gauss{f_M}{\hat{\mu}^j_M}{\hat{\Sigma}^j_M}
\end{align}
with
\begin{align}
	\label{eq:sde_lemmean}
	\hat{\mu}^j_M &=
m^\Delta_M + 
(R \Delta_t) S^\Delta_M \KMinvDelta \left(K^\Delta_{0:j-1,M}\right)^\top \invSt{0:j-1} \left( x_{1:j} - \mut{0:j-1} \right),\\
	\hat{\Sigma}^j_M &= 
	S^\Delta_M - 
	(R \Delta_t)^2
	S^\Delta_M \KMinvDelta \left(K^\Delta_{0:j-1,M}\right)^\top \invSt{0:j-1} K^\Delta_{0:j-1,M} \KMinvDelta S^\Delta_M.
	\end{align}
\end{lemma}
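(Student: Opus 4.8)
The statement to prove is Lemma~\ref{sde_lemma}, the SDE-formulation analogue of Lemma~\ref{lemma}. The plan is to mirror the induction proof of Lemma~\ref{lemma} essentially verbatim, carefully propagating the extra factors of $R\Delta_t$ that distinguish Eq.~\eqref{eq:sde_transition} from Eq.~\eqref{eq:transition}. Concretely, I would reuse the two technical ingredients already collected in the excerpt: the affine-Gaussian identities in Eqs.~\eqref{eq:gaussaffbefore}--\eqref{eq:gaussaffjoint}, and the block-matrix inversion formula in Eq.~\eqref{eq:blockinv}. The key observation is that in the SDE transition $p^\Delta(x_{j+1}\mid x_j, f_M)$ the role of the matrix $F$ in Eq.~\eqref{eq:gaussaffbefore} is played by $R\Delta_t\, K^\Delta_{jM}\KMdelinv$ (rather than $K_{tM}\KMinv$), and the role of $A$ by $R\Delta_t Q^\Delta + (R\Delta_t)^2(K^\Delta_{jj}-K^\Delta_{jM}\KMdelinv\KdelT{jM})$. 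With these substitutions the definitions of $\mut{j}$ and $\St{j}{j'}$ in Eqs.~\eqref{eq:sde_thmmu}--\eqref{eq:sde_thmS} are exactly what one obtains, and the $(R\Delta_t)$ and $(R\Delta_t)^2$ prefactors in Eqs.~\eqref{eq:sde_lemmean}--\eqref{eq:sde_lemqfM} fall out of $BF^\top = (R\Delta_t) S^\Delta_M \KMdelinv \KdelT{jM}$ appearing in Eqs.~\eqref{eq:gaussaffafter1}--\eqref{eq:gaussaffafter2}.

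First I would establish the base case $j=1$: starting from $q^\Delta(f_M,x_1\mid x_0) = q^\Delta(f_M)\,p^\Delta(x_1\mid x_0,f_M)$, apply Eqs.~\eqref{eq:gaussaffafter1}--\eqref{eq:gaussaffjoint} with $y\leftrightarrow f_M$, $x\leftrightarrow x_1$, and read off that the marginal of $x_1$ is $\gauss{x_1}{\mut{0}}{\St{0}{0}}$ and the conditional of $f_M$ is $\gauss{f_M}{\hat\mu^1_M}{\hat\Sigma^1_M}$, matching Eqs.~\eqref{eq:sde_thmgauss}--\eqref{eq:sde_thmvar} and \eqref{eq:sde_lemqfM}--\eqref{eq:sde_lemmean} at $j=1$ (where the conditioning slices are empty). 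Then for the inductive step I would, exactly as in step~o) of the proof of Lemma~\ref{lemma}, regroup $q^\Delta(f_M,x_{j+1},\dots,x_1\mid x_0) = p^\Delta(x_{j+1}\mid x_j,f_M)\, q^\Delta(f_M,x_j,\dots,x_1\mid x_0)$, invoke the induction hypothesis on the second factor, and reduce to showing $p^\Delta(x_{j+1}\mid x_j,f_M)\,q^\Delta(f_M\mid x_j,\dots,x_0) = q^\Delta(f_M\mid x_{j+1},\dots,x_0)\,q^\Delta(x_{j+1}\mid x_j,\dots,x_0)$. Applying the affine-Gaussian formulas one more time (now with $F = R\Delta_t\, K^\Delta_{jM}\KMdelinv$ and $b,B$ equal to the conditional mean/covariance $\hat\mu^j_M,\hat\Sigma^j_M$ from the hypothesis) produces $\bar q^\Delta(x_{j+1}\mid\cdot)$ and $\bar q^\Delta(f_M\mid\cdot)$; verifying $\bar q^\Delta(x_{j+1}\mid\cdot) = q^\Delta(x_{j+1}\mid\cdot)$ is a direct substitution using Eqs.~\eqref{eq:sde_thmmu}--\eqref{eq:sde_thmS}, and verifying $\bar q^\Delta(f_M\mid\cdot) = q^\Delta(f_M\mid\cdot)$ requires the backwards block-inversion identity to concatenate the length-$j$ slice with the new index $j$ into a length-$(j{+}1)$ slice — identifying $A=\St{0:j-1}{0:j-1}$, $B=\St{0:j-1}{j}$, $C=\St{j}{0:j-1}$, $\widetilde D = \hat\Sigma_{j+1}^{-1}$, exactly as in Eqs.~(63)--(68) of~\citet{lindinger2020}.

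The main obstacle, as in the original lemma, is purely bookkeeping: one must be scrupulous about where the $(R\Delta_t)$ factors land, since the mean correction picks up one power and the covariance correction two, and the kernel cross-blocks $K^\Delta_{jM}$ versus $K^\Delta_{jj}$ scale differently. I expect no genuinely new difficulty beyond Lemma~\ref{lemma}; the cleanest write-up would state "the proof proceeds verbatim as for Lemma~\ref{lemma} with the replacements $K_{tM}\KMinv \rightsquigarrow R\Delta_t\, K^\Delta_{jM}\KMdelinv$ and $Q + K_{tt} - K_{tM}\KMinv K_{tM}^\top \rightsquigarrow R\Delta_t Q^\Delta + (R\Delta_t)^2(K^\Delta_{jj} - K^\Delta_{jM}\KMdelinv\KdelT{jM})$", and then only spell out the base case and the one place (the covariance/block-inversion recombination) where the tracking of powers of $R\Delta_t$ is non-obvious. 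Once Lemma~\ref{sde_lemma} is in hand, Theorem~\ref{sde_theorem} follows immediately by the same argument as in the proof of Theorem~\ref{theorem}: integrate $f_M$ out of Eq.~\eqref{eq:sde_lemmain}, note the $f_M$-conditional is a normalized density integrating to one, and the claimed product form of $q^\Delta(x_j)$ remains.
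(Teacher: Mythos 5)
Your proposal is correct and takes essentially the same route as the paper: the authors likewise prove Lemma~\ref{sde_lemma} by repeating the induction of Lemma~\ref{lemma} with all quantities replaced by their $\Delta$-counterparts, noting only that the factors of $R\Delta_t$ in Eqs.~\eqref{eq:sde_transition}, \eqref{eq:sde_thmmu} and \eqref{eq:sde_thmS} must be tracked carefully. Your explicit identification of $F = R\Delta_t\, K^\Delta_{jM}\KMdelinv$ and of the modified $A$ in the affine-Gaussian step, and your bookkeeping of the single power of $R\Delta_t$ in the mean correction versus the squared power in the covariance correction, is exactly the substance the paper leaves implicit.
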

Using this lemma, we can prove Thm.~\ref{sde_theorem} in exactly the same way as we did on p.~\pageref{thmproof} to prove Thm.~\ref{theorem}:
\begin{proof}[\emph{\textbf{Proof of Theorem \ref{sde_theorem}}}]
	\label{sde_thmproof}
	Exactly as on p.~\pageref{thmproof} [Eqs.~\eqref{eq:tproof1} and \eqref{eq:tproof2}], using Eq.~\eqref{eq:sde_marginal} instead of Eq.~\eqref{eq:marginal} and Lem.~\ref{sde_lemma} instead of Lem.~\ref{lemma}.
\end{proof}
This leaves us with the proof of Lem.~\ref{sde_lemma}. Naturally, this proof also works very similarly as the proof of Lem.~\ref{lemma}:
\begin{proof}[\emph{\textbf{Proof of Lemma \ref{sde_lemma}}}]
	\label{sde_lemproof}
	Very similar to the proof in Sec.~\ref{sec:appxproof}.
	Care has to be taken that the factors of $ R\Delta_t $ appearing in Eqs.~\eqref{eq:sde_transition}, \eqref{eq:sde_thmmu}, \eqref{eq:sde_thmS} are treated correctly.
	The only difference apart from this is that the definitions of all quantities from Sec.~\ref{sec:appxgpssm} have to be replaced with their counterparts in Sec.~\ref{sec:appxsde}.
\end{proof}

\begin{proof}[\emph{\textbf{Proof of Corollary \ref{eqi_marg_cor}}}]
 We substitute the quantities from Eqs.~\eqref{eq:sde_thmsetting} -~\eqref{eq:k2}
 into Eqs.~\eqref{eq:sde_qx0} and ~\eqref{eq:sde_thmmain}.
Independently of the resolution $R$, the variational posterior over the initial latent state are consistent, i.e. $q^\Delta(x_0) = q(x_0) $.
For $ R=1 $, we can see, that (recursively) $ q^\Delta(x_j| x_{j-1},\dots,x_0) = q(x_t| x_{t-1},\dots,x_0) $ if $ j = t $ by comparing Eqs.~\eqref{eq:thmgauss} and \eqref{eq:sde_thmgauss}.
This means that the marginals in Eqs.~\eqref{eq:thmmain} and \eqref{eq:sde_thmmain} are equal, implying that the settings in Eq.~\eqref{eq:sde_thmsetting} -~\eqref{eq:k2} satisfy Eq.~\eqref{eq:consistency}.
\end{proof}

\begin{proof}[\emph{\textbf{Proof of Corollary \ref{eqi_elbo_cor}}}]
 We substitute the quantities from Eqs.~\eqref{eq:sde_thmsetting} -~\eqref{eq:k2},~\eqref{eq:sde_elbosetting}
 into Eq.~\eqref{eq:sde_elbo1}.
Independently of the resolution $R$, the two KL-terms are consistent.
For $R = 1$, we have $J = T/R = T$, meaning that we have the same number of summands in the remaining terms. Moreover, using Cor.~\ref{eqi_marg_cor}, we see that for $j=t$ we have $q(x_t) = q^\Delta(x_j)$. This means that the individual summands, $\mathbb{E}_{q^\Delta(x_j)}[p^\Delta(y_t \vert x_t)])$ and $\mathbb{E}_{q(x_t)}[p(y_t \vert x_t)]$, are equal, implying that the sums in  Eqs.~\eqref{eq:sde_elbo1} and~\eqref{eq:gpssm_elbo} are equal. In total, this implies that $\mathcal{L}=\mathcal{L}_\Delta$.
\end{proof}

\newpage

\section{The Prior in Gaussian Process State-Space Models}
\label{sec:appxprior}
\subsection{Analytical Marginalization over the Inducing Outputs}
In this section, we note that we can analytically marginalize over the inducing outputs $f_M$ in the prior [Eq.~\eqref{eq:pjoint}] in a very similar way as we did for the posterior [Eq.~\eqref{eq:q}] in Sec.~\ref{sec:appxgpssm}.
Here, we are interested in $p(x_t)$, the prior marginals of the latent state at time point $ t\in \{1,\dots,T\} $.
They can be obtained as [cf.~Eqs.~\eqref{eq:marginal}-\eqref{eq:transition} and see also the definitions of the variables there]
\begin{equation}\label{eq:marginalprior}
p(x_t) = \int p(x_0) p(f_M,x_t,\dots,x_1|x_0) df_M \prod_{t'=0}^{t-1}dx_{t'},
\end{equation}
where
\begin{align}\label{eq:jointprior}
p(f_M,x_t,\dots,x_1|x_0) &= p(f_M) \prod_{t'=0}^{t-1} p(x_{t'+1}|x_{t'},f_M),\\
\label{eq:px0}
p(x_0) &= \gauss{x_0}{\mu_0}{Q_0},  \\
\label{eq:pfM}
p(f_M) &= \gauss{f_M}{0}{K_{MM}},\\
\label{eq:transitionprior}
p(x_{t+1}|x_{t},f_M) &= \gauss{x_{t+1}}{x_{t} + K_{tM}\KMinv f_M}{Q+K_{tt}-K_{tM}\KMinv K_{tM}^\top }. 
\end{align}

Similarly as in Thm.~\ref{theorem} [see also there for the notation], we find:
\begin{theorem} \label{theoremprior}
	For the prior of the GP SSM as defined above, the marginals of the latent state at time point $ t\in \{1,\dots,T\} $, can be obtained as
	\begin{equation}\label{eq:thmmainprior}
		p(x_t) = \int p(x_0) \left[ \prod_{t'=1}^{t} p(x_{t'}|x_{t'-1},\dots,x_0) \right] \prod_{t'=0}^{t-1} dx_{t'},
	\end{equation}
	where all terms are Gaussian:
	\begin{align} \label{eq:thmgaussprior}
	p(x_{t}|x_{t-1},\dots,x_0) &= \gauss{x_t}{\doublehat{\mu}_t}{\doublehat{\Sigma}_t},\\
	\label{eq:thmmeanprior}
	\doublehat{\mu}_t &= x_{t-1} + \Stt{t-1}{0:t-2} \left( \doubletilde{S}_{0:t-2, 0:t-2}\right)^{-1} \left( x_{1:t-1} - x_{0:t-2} \right),\\
	\label{eq:thmvarprior}
	\doublehat{\Sigma}_t &= \Stt{t-1}{t-1} - \Stt{t-1}{0:t-2} \left( \doubletilde{S}_{0:t-2, 0:t-2}\right)^{-1} \Stt{0:t-2}{t-1}.
	\end{align}
	Here, the terms are given by
	\begin{equation}
	\label{eq:thmSprior}
	\Stt{t}{t'} = K_{tM}\KMinv K_{t'M}^\top + \delta_{tt'}(Q + K_{tt} - K_{tM} \KMinv K_{t'M}^\top).
	\end{equation}
\end{theorem}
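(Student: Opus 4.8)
The plan is to observe that the prior marginalization posed in Eqs.~\eqref{eq:marginalprior}--\eqref{eq:transitionprior} is \emph{formally identical} to the posterior marginalization treated in Section~\ref{sec:appxgpssm} [Eqs.~\eqref{eq:marginal}--\eqref{eq:transition}]: the transition kernel $p(x_{t+1}|x_t,f_M)$ is literally the same object in both, and the only difference lies in the two ``global'' Gaussians, where $q(x_0)=\gauss{x_0}{m_0}{S_0}$ is replaced by $p(x_0)=\gauss{x_0}{\mu_0}{Q_0}$ and $q(f_M)=\gauss{f_M}{m_M}{S_M}$ by $p(f_M)=\gauss{f_M}{0}{K_{MM}}$. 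Since nothing in the derivation of Lemma~\ref{lemma} (and hence of Theorem~\ref{theorem}) uses the particular values of these parameters, both results carry over \emph{verbatim} under the substitution $m_0\mapsto\mu_0$, $S_0\mapsto Q_0$, $m_M\mapsto 0$, $S_M\mapsto K_{MM}$.

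First I would push this substitution through the building blocks of Theorem~\ref{theorem}, Eqs.~\eqref{eq:thmmu}--\eqref{eq:thmS}. Taking $m_M=0$ removes the GP-mean contribution, so $\mut{t}=x_t$; taking $S_M=K_{MM}$ collapses the data-fit term through $K_{tM}\KMinv K_{MM}\KMinv K_{t'M}^\top = K_{tM}\KMinv K_{t'M}^\top$, so every $\St{t}{t'}$ becomes exactly the $\Stt{t}{t'}$ of Eq.~\eqref{eq:thmSprior}. Substituting $\mut{t}=x_t$ and $\St{t}{t'}\mapsto\Stt{t}{t'}$ into the conditional mean and covariance Eqs.~\eqref{eq:thmmean}--\eqref{eq:thmvar} then reproduces Eqs.~\eqref{eq:thmmeanprior}--\eqref{eq:thmvarprior} line for line, and Eq.~\eqref{eq:thmmain} specializes to Eq.~\eqref{eq:thmmainprior}. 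That finishes the proof along the shortest path; the factorization $p(x_{t}|x_{t-1},\dots,x_0)=\gauss{x_t}{\doublehat{\mu}_t}{\doublehat{\Sigma}_t}$ is inherited rather than re-derived.

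If a self-contained argument is wanted instead, I would mirror the induction behind Lemma~\ref{lemma}: prove by induction on $t$ that $p(f_M,x_t,\dots,x_1|x_0)$ [Eq.~\eqref{eq:jointprior}] equals $p(f_M|x_t,\dots,x_0)\prod_{t'=1}^{t}p(x_{t'}|x_{t'-1},\dots,x_0)$ with the claimed Gaussian factors, where the base case uses the affine-Gaussian identities Eqs.~\eqref{eq:gaussaffbefore}--\eqref{eq:gaussaffjoint} on $p(f_M)\,p(x_1|x_0,f_M)$, and the inductive step regroups $p(f_M,x_{t+1},\dots|x_0)=p(x_{t+1}|x_t,f_M)\,p(f_M,x_t,\dots|x_0)$, invokes the hypothesis, and applies Eqs.~\eqref{eq:gaussaffbefore}--\eqref{eq:gaussaffjoint} once more; integrating out $f_M$ at the end contributes a factor of one because $p(f_M|x_t,\dots,x_0)$ is normalized, leaving Eq.~\eqref{eq:thmmainprior}.

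The main obstacle, on that direct route, is the one already flagged in Lemma~\ref{lemma}: in the inductive step one must verify that the conditional for $f_M$ coming out of the affine-Gaussian update has precisely the claimed mean and covariance, which forces a reordering of terms until the block-matrix-inversion identity Eq.~\eqref{eq:blockinv} can be read \emph{backwards} with $A=\Stt{0:t-1}{0:t-1}$, $B=\Stt{0:t-1}{t}$, $C=\Stt{t}{0:t-1}$ and $\widetilde{D}=\doublehat{\Sigma}_{t+1}^{-1}$ (so that $D=\Stt{t}{t}$). Via the specialization route this obstacle evaporates; the only thing left to check there is that the FITC-style conditional independence of the per-time-step GP values given $f_M$ built into Eq.~\eqref{eq:jointprior} is the same assumption used in Eq.~\eqref{eq:joint}, so that Lemma~\ref{lemma} transfers without modification.
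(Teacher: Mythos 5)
Your proposal is correct and follows essentially the same route as the paper: Theorem~\ref{theoremprior} is proved there as the special case of Theorem~\ref{theorem} obtained by substituting $p(x_0)$ for $q(x_0)$ and setting $m_M=0$, $S_M=K_{MM}$, which collapses $\mut{t}$ to $x_t$ and $\St{t}{t'}$ to $\Stt{t}{t'}$ exactly as you describe. The self-contained inductive argument you sketch as a fallback is not needed and is not what the paper does.
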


Note that we have indexed the variables with double symbols, $\doublehat{\cdot}$ or $\doubletilde{\cdot}$ to clearly distinguish them from the very similar variables for the approximate posterior appearing in Thm.~\ref{theorem}.

\begin{proof}[\emph{\textbf{Proof of Theorem \ref{theoremprior}}}]
The proof is very simple as Thm.~\ref{theoremprior} is a special case of Thm.~\ref{theorem}, where we replace $q(x_0)$ by $p(x_0)$ and $q(f_M)$ by $p(f_M)$ [cf.~Eqs.~\eqref{eq:marginal}-\eqref{eq:transition} and Eqs.~\eqref{eq:marginalprior}-\eqref{eq:transitionprior}].

First, replacing $q(x_0)$ by $p(x_0)$ only has an effect on Eq.~\eqref{eq:thmmain}, where
$q(x_0)$ is replaced by $p(x_0)$ to arrive at Eq.~\eqref{eq:thmmainprior}. This is possible since $q(x_0)$ remains unchanged in the proof of Thm.~\ref{theorem} (see Sec.~\ref{sec:appxproof}).

Second, replacing $q(f_M)$ by $p(f_M)$ amounts to using Thm.~\ref{theorem} in the special case $m_M = 0$ and $S_M = K_{MM}$ [cf.~Eqs.~\eqref{eq:qfM} and \eqref{eq:pfM}].
Plugging these values in Eqs.~\eqref{eq:thmgauss}-\eqref{eq:thmS} leads to the corresponding formulas in Eqs.~\eqref{eq:thmgaussprior}-\eqref{eq:thmSprior}.
\end{proof}

\newpage

\section{Pseudocode}
\label{sec:pseudocode}
Alg.~\ref{alg:backfitting} shows the backfitting algorithm in more detail. The total number of backfitting cycles is denoted by $\tau_c$.
Each iteration consists of two steps.
First, we update the parameters $\theta^{(l)} = \{m_0^{(l)}, S_0^{(l)}, m_M^{(l)}, S_M^{(l)} \}$ of the $l$-th component  by maximizing the lower bound $\mathcal{L}_{\text{}dil}$ (see Alg.~\ref{alg:update}), while keeping the parameters of the remaining components fixed.
During parameter optimization, we only need to simulate the latents $x_T^{(l)}$ of the $l$-th component, while we re-use the cached latents, $x_T^{(\neq l)}$, for all other components.

For simulating trajectories during parameter optimization, we employ the Euler-Maruyama scheme (see Alg.~\ref{alg:simulate}) with stepsize $R^{(l)}$. Here, the use of dilated mini-batches allows us to learn effects on different time scales.
Importantly, we are free to choose a different resolution for computing the cached latents, $x_T^{(\neq l)}$. 
By applying the default resolution level ($R=1$) for the latter, we ensure that the discretization level is sufficiently tight for faster varying dynamics (i.e. $R^{(l')}<R^{(l)}$).

The algorithm has a runtime complexity of $O(\tau_c((M^2LT) + \tau_o(M^2L(B_0+B) + M^3L)))$, where 
$\tau_o$ is the number of optimization steps for updating the parameter of one component (see Alg.~\ref{alg:update}). 

\begin{algorithm}[H]
   \caption{Backfitting Algorithm}
   \label{alg:backfitting}
\begin{algorithmic}
   \State {\bfseries Input:} data $y_T$, initial parameters $\Theta=\{ \theta^{(l)} \}_{l=1}^L$, resolutions $\{R^{(l)}\}_{l=1}^L$, mini-batch size $B$, buffer size $B_0$.
  \For{$i=1$  {\bfseries to} $\tau_c$}
   \For{$l=1$ {\bfseries to} $L$}
\State $\theta^{(l)}$\ \ =  update($y_T$,  $x^{(\neq l)}_T$, $\theta^{(l)}$, $R^{(l)}$, $B$, $B_0$)  \Comment{Alg.~\eqref{alg:update}}
\State $x^{(l)}_T$ = sampleSeq( $\theta^{(l)}$, 1, T, 0) \Comment{Alg.~\eqref{alg:simulate}}
\EndFor 
\EndFor
\State \textbf{return} $\Theta$
\label{algo:backfitting}
\end{algorithmic}
\end{algorithm}

\begin{algorithm}[H]
   \caption{sampleSeq(parameter $ \theta^l$, resolution $R$, mini-batch size $B$, buffer size $B_0$)
   }
   \label{alg:simulate}
\begin{algorithmic}
\State $f^{(l)}_M$ = sampleMultivariateGaussian$(m^{(l)}_M$, $\text{S}^{(l)}_M$) 
\State$x^{(l)}_0$  \ = sampleMultivariateGaussian$(m^{(l)}_0$, $\text{Q}^{(l)}_0$)
   \For{$j=1$ {\bfseries to} $B_0 + B$}
\State $x^{(l)}_{j}$  = sampleNext($x_{j-1}^{(l)}, f_M^{(l)},  R$)  \Comment{Eq.~\eqref{eq:emm}}
\EndFor
\State \textbf{return} $x^{(l)}_{B_0:B_0+B}$
\label{algo:sample}
\end{algorithmic}
\end{algorithm}

\begin{algorithm}[H]
   \caption{update(data $y_T$, latents $x^{(\neq l)}_T$, parameter $\theta^{(l)}$, resolution $ R^{(l)}$, mini-batch size $B$, buffer size $B_0$)}
   \label{alg:update}
\begin{algorithmic}
  \For{$i=1$ {\bfseries to} $\tau_o$}
\State $y_B$ = sampleDilSeq($y_T$,$R^{(l)}$, $B$) \Comment{$y_B$=$\{ y_{j_0+j} \}_{j=1}^B$}
\State $x^{(\neq l)}_B$ = sampleDilSeq($x^{(\neq l)}_T$,$R^{(l)}$, $B$) \Comment{$x^{(\neq l)}_B$=$\{ x^{(\neq l)}_{j_0+j} \}_{j=1}^B$}
\State $x^{(l)}_B$ = sampleSeq($\theta^{(l)}$, $R^{(l)}$, $B$, $B_0$) \Comment{Alg.~\eqref{alg:simulate}}
\State update $\theta^{(l)}$ using $\mathcal{L}_{{\Delta}}$($y_B$, $x_B$)  \Comment{Eq.~\eqref{eq:elbo_sde}}
\EndFor
\State \textbf{return} $\theta^{(l)}$
\label{algo:update}
\end{algorithmic}
\end{algorithm}

\newpage

\section{Additional Results and Experimental Details}
\label{sec:appxexperiments}

\subsection{Parameter Settings}
\label{sec:appx_parameters}
In the following, we give auxiliary details about our experimental protocol.

\begin{itemize}
    \item 
For initialising model and variational parameters, we followed the default values given by \citet{doerr2018probabilistic}. The initial values and additional configuration details are given in Supplementary Table~\ref{table:parameters}. For semi-synthetic experiments we changed the initial observation noise to $0.1^2$. 
\item 
In line with the previous implementation, we restricted the linear emission matrix to be $C^{(l)}:=[\mathbb{I},0]\in \mathbb{R}^{D_y \times D_x}$, where $\mathbb{I} \in \mathbb{R}^{D_y \times D_y}$ is the identity matrix.
\item
We let the initial learning rate decay over time by lowering the learning rate by a multiplicative factor of $0.99$ every $10$ steps.
For MC/MR-GPSSM, we reset the learning rate in each backfitting cycle.
\item
We train each output dimension independently resulting on $4$ models on the semi-synthetic datasets (S, M1, M2, F) and $4$ models on the engine dataset (PN, HC, NOx, Temp).
\item
Our preliminary results suggested that feeding back the Monte Carlo estimates into the backfitting algorithm leads to noisy estimates of the lower bound.
To avoid local optima, we therefore only use the mean partial residual, $\text{mean}({\hat{y}}^{(\neq l)}_T)$, during parameter optimization.
In order to obtain meaningful uncertainties, we opted for a full Monte Carlo treatment during prediction time.
\item 
Empirical runtimes in Supplementary Table~\ref{table:batchsizeVSresolution} are reported on a standard MacBook Pro laptop with 3,1 GHz Dual-Core Intel Core i5 processor.
\end{itemize}

\begin{table}[h]
    \caption{\textbf{Parameter Settings.} Initial values for model and variational parameters (left) and hyperparameter settings (right) applied in all GPSSM/(MC/MR)-GPSSM experiments.}
    \begin{minipage}{.5\linewidth}
        \begin{tabular}{ll}
        \hline \textbf{Parameter} & \textbf{Initialization} \\ \hline
         Inducing inputs& $\boldsymbol{\zeta}_{d} \sim \mathcal{U}(-2,2)$ \\
         Inducing outputs& $q\left(\boldsymbol{z}_{d}\right)=\mathcal{N}\left(\boldsymbol{z}_{d} \mid \boldsymbol{\mu}_{d}, 0.01^{2} \cdot \boldsymbol{I}\right)$ \\
         & $\mu_{d, i} \sim \mathcal{N}\left(\mu_{d, i} \mid 0,0.05^{2}\right)$ \\
         Process noise & $\sigma_{\mathrm{X}}^{2}=0.002^{2}$ \\
        Observation noise & $\sigma_{\mathrm{Y}}^{2}=1^{2}$ \\
         RBF-Kernel & $\sigma^{2}=0.5^{2}$, $lengthscale = 2 $ \\
        \hline
        \end{tabular}
    \end{minipage}%
    \begin{minipage}{.5\linewidth}
      \centering
            \begin{tabular}{ll}
            \hline \textbf{Parameter} & \textbf{Configuration} \\
            \hline \# of inducing points & $50$ \\
            \# of samples & $20$ \\
            \# of minibatches & $20$ \\
            Minibatch size $B$ & $50$ \\
            Buffer size $B_0$ & $10$ \\
            Initial Learning rate & $0.05$ \\
            \hline
            \end{tabular}
    \end{minipage} 
\label{table:parameters}
\end{table}

\newpage

\subsection{Semi-Synthetic Data}
\label{sec:appx:semi_synth}

We demonstrate the need for multiple resolutions using  $4$ semi-synthetic datasets with varying properties: 
slow dynamics (S),
mixed dynamics (M1, M2), and
fast dynamics (F).
All datasets are created based on a single measurement of the engine dataset containing 37,961 datapoints.
The simulations are generated as follows:
\begin{itemize}
\item 
We train on each output one independent standard GPSSM model with one latent state $D_x=1$ each.
For the output HC and NOx, we apply the resolution $R=1$ to extract latent states with fast dynamics, on the outputs Temp and PN we apply the resolution $R=30$ to extract latent states with slow dynamics.
Parameter settings are as described in Supplementary Table~\ref{table:parameters}.
\item 
The semi-synthetic datasets are formed by additively combining the latent states: 
Dataset F (fast dynamics) consists of the sum of the latent states extracted from HC and NOx, Dataset S (slow dynamics) consists of the sum of the latent states extracted from PN and Temp.
Dataset M1 (mixed dynamics) consists of the sum of the latent states extracted from Temp and HC, and dataset M2 analgously from NOx and Temp.
\end{itemize}
The inputs used in the simulation experiments are the original inputs of the engine dataset.
All $4$ datasets are visualized in Supplementary Figure \ref{figapp:simulated_data}. 
We report the predictive performance of the different methods across the four datasets in
Table~\ref{table:synthetic}. Figure~\ref{fig:simulated_oneexample} and
Supplementary Figure~\ref{figapp:simulated_example}
show the predictions on one example run for all  datasets using varying resolutions.
In addition, we investigated if an increased batch size can compensate for using the standard resolution on datasets with slow dynamics in Supplementary
Table~\ref{table:batchsizeVSresolution}.

\begin{table*}[t]
\centering
\caption{
\textbf{Experimental Results on Semi-Synthetic Data using nLL.}
Predictive performance of GPSSM and MR/MC-GPSSM using different resolutions on four semi-synthetic datasets with varying dynamics: slow (S), mixed (M1, M2) and fast (F). The experiment is repeated $5$ times and we report the mean (standard error) over all runs.
The best performing method, and all methods whose mean statistic overlap within the standard error, are marked in bold.}
\begin{small}
\begin{tabular}{lllllllll} 
\hline
\multicolumn{1}{l}{}  & \multicolumn{1}{l}{} & \multicolumn{2}{c}{\textbf{GPSSM }}            &  & \multicolumn{2}{c}{\textbf{MC-GPSSM} }  &  & \multicolumn{1}{c}{\textbf{MR-GPSSM}}  \\ 
\cline{3-4}\cline{6-7}\cline{9-9}
\multicolumn{1}{l}{}  & \multicolumn{1}{l}{} & $R=1$                  & $R=30$                &  & $R=[1,1]$              & $R=[30,30]$    &  & $R=[30, 1]$                            \\ 
\hline
\multirow{4}{*}{nLL}  & \textbf{F}           & \textbf{-33.42 (0.95)} & -15.58 (0.75)         &  & \textbf{-36.02 (3.40)} & -6.29 (6.92)   &  & \textbf{-35.29 (2.58)}                 \\
                      & \textbf{M1}          & -6.26 (6.37)           & -15.96 (0.71)         &  & -11.03 (2.43)          & -13.87 (0.97)  &  & \textbf{-31.49 (2.85)}                 \\
                      & \textbf{M2}          & -13.11 (0.89)          & 50.33 (58.55)         &  & -11.38 (0.57)          & -5.59 (1.06)   &  & \textbf{-27.75 (3.27)}                 \\
                      & \textbf{S}           & $> 1000$               & \textbf{-4.45 (3.82)} &  & 81.34 (45.95)          & 153.65 (73.95) &  & 41.83 (18.37)                          \\
\hline
\end{tabular}
\end{small}
\label{table:synthetic2}
\end{table*}

\begin{landscape}
\begin{table}[h]
\centering
\caption{\textbf{Batch size vs. resolution.} 
We investigate the difference between applying a large resolution ($R=30$) and a small batch size ($B=50$) vs. applying the standard resolution ($R=1$) and a large batch size ($B=1500$).
The experiment is performed on the semi-synthetic dataset S which requires a long history for learning the underlying dynamics.
Our comparison takes the training time into account by running (MC-)GPSSM ($B=1500$, $R=1$) once with the same computational cost but reduced number of iterations, and once with the same number of iterations but increased runtime.
We performed the experiment five times and report the mean RMSE/nLL (standard error) for all variants. 
We exclude runs that have clearly failed due to convergence issues and report the number of these runs in a separate row.
}
\begin{tabular}{lllllllll} 
\hline
                          &  & \multicolumn{2}{c}{\textbf{Large resolution, small batch size }} &  & \multicolumn{4}{c}{\textbf{Standard resolution, large batch size}}  \\ 
\cline{3-4}\cline{6-9}
Batch size                &  & 50            & 50                                               &  & 1500         & 1500           & 1500        & 1500               \\
Iterations                &  & 600           & 12x50                                            &  & 600          & 12x50          & 20          & 12x2               \\
$R$                       &  & 30            & {[}30,30]                                        &  & 1            & {[}1,1]        & 1           & {[}1,1]            \\ 
\hline
\textbf{\# excluded runs} &  & 0             & 0                                                &  & 1            & 1              & 2           & 2                  \\
\textbf{RMSE}             &  & 0.16(0.01)   & 0.20(0.03)                                      &  & 0.30(0.02)  & 0.31(0.01)    & 0.66(0.09) & 0.49(0.09)        \\
\textbf{nLL}              &  & 50.33(58.55) & 153.65(73.95)                                   &  & 51.16(8.98) & 105.50(45.81) & $>1000$     & $>1000$            \\
\textbf{Time (seconds)}   &  & 220           & 585                                              &  & 4958         & 10486          & 201         & 665                \\
\hline
\end{tabular}
\label{table:batchsizeVSresolution}
\end{table}
\end{landscape}

\begin{figure}[h]
    \centering
    \includegraphics[width=\textwidth]{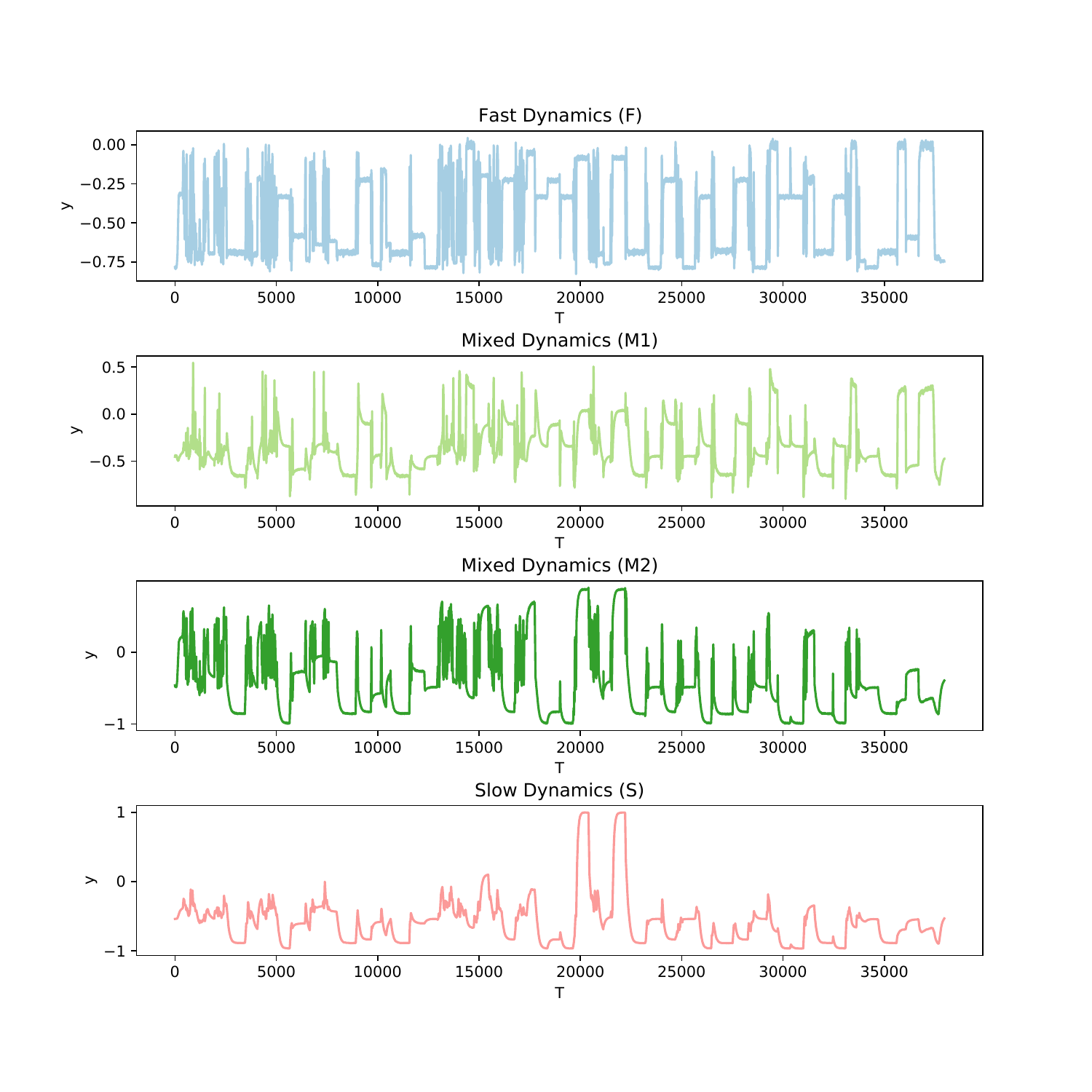}
    \caption{\textbf{Semi-synthetic datasets with varying dynamics.} 
    From top to bottom: one dataset with fast varying dynamics (F), two datasets with mixed dynamics (M1, M2) containing effects on short and long timescales, and one dataset with slowly varying dynamics (S).}
    \label{figapp:simulated_data}
\end{figure}

\begin{figure}[h]
    \centering
    \includegraphics[width=\textwidth]{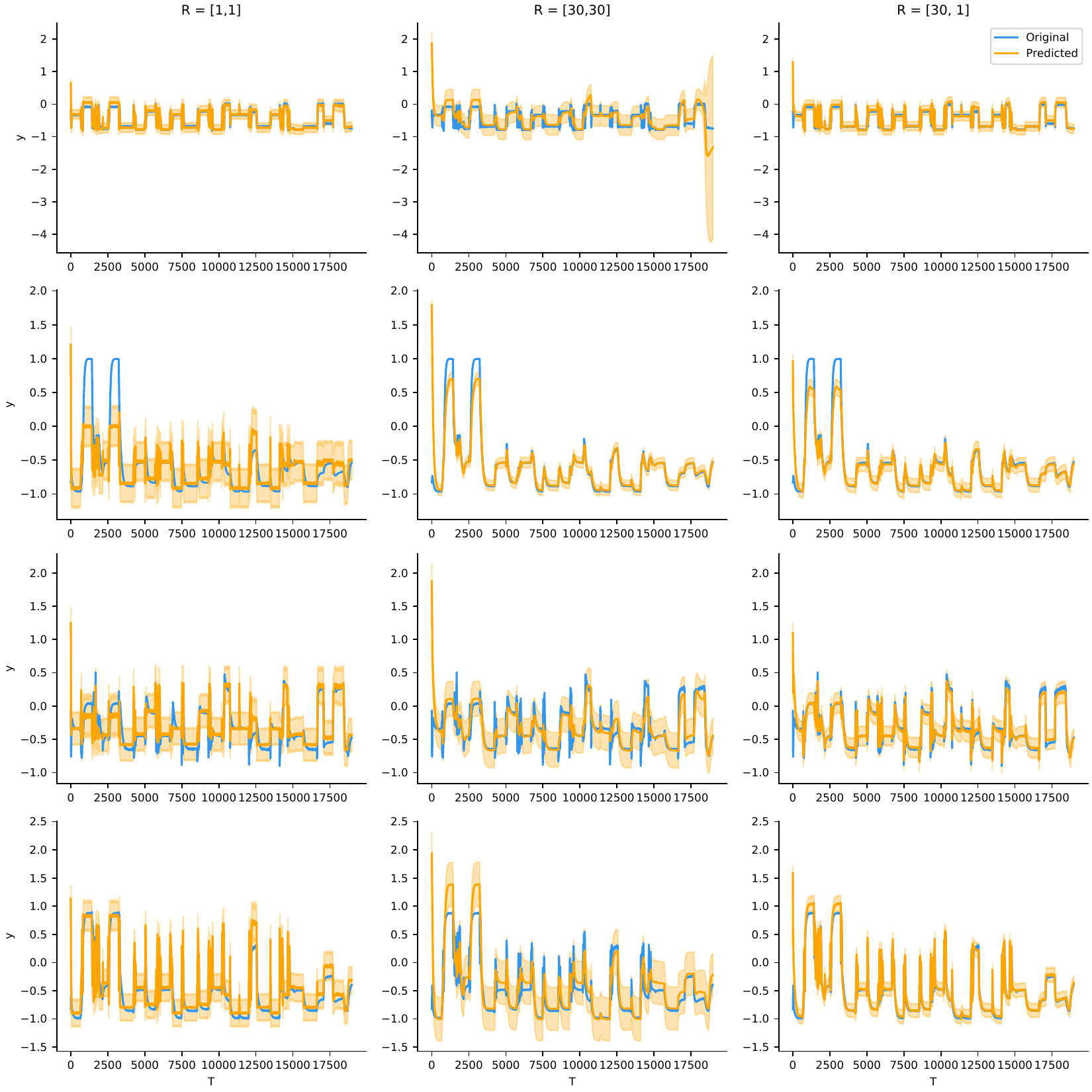}
    \caption{\textbf{Predictions on semi-synthetic dataset with varying dynamics using MC/MR-GPSSM.} 
    From top to bottom:
    dataset F (fast dynamics), 
    dataset S (slow dynamics),
    dataset M1 and M2 (mix of fast and slow dynamics).
    From left to right:
    MC-GPSSM using small resolutions ($R=[1,1]$), MC-GPSSM using large resolutions ($R=[30,30]$), MR-GPSSM using varying resolutions ($R=[1,30]$).
    MC-GPPSM only performs well if the chosen resolution fits well to the data and cannot handle datasets with mixed dynamics.
    Our proposed approach, MR-GPPSM, achieves accurate predictions over all scenarios.
    }
    \label{figapp:simulated_example}
\end{figure}

\newpage 

\subsection{Engine Dataset}
\label{sec:appx_cross_validation}
Capturing the dynamics of an engine is a difficult task requiring a careful design of experiment.
To enable accurate modeling while keeping the measurement costs low, different design strategies were applied across the 21 measurements.
In order to obtain comparable test datasets, we therefore applied stratified cross-validation.
For this, we divided the measurements into the following 6 groups:

\begin{itemize}
    \item 
    Group G0: measurements (mix between standard dynamical design of experiment and test bench drives): 4,8,9 and 16.
    \item 
    Group G1 (low gradients): measurements 10, 11, 17 and 18.
    \item 
    Group G2 (others, not included in test): measurements 3, 7, 12, and 13.
    \item 
    Group G3: (standard dynamical design of experiment is split into four segments according to engine speed and engine torque) measurements 0, 5, 14, 15, 19
    \item 
    Group G4 (standard dynamical design of experiment): measurements 1, 2, 6, 20 
    \item 
    Group G5 (real driving emissions on the road): measurement 21
\end{itemize}
For each experiment, we split the measurements into a training and a test set such that the test split consists of 1 measurement of G0, 2 measurements of G1, 1 measurement of G3, 1 measurement of G4, and the measurement of G5.
We trained for each output (particle numbers, hydrocarbon concentration, nitrogen oxide concentration and engine temperature) a separate model and used the following four inputs: speed, load, lambda and ignition angle.
All results presented on this data have been averaged over five different splits. For each split, we ran the experiment three times and compute the predictions on the test set using the model with maximum lower bound $\mathcal{L}$.
We normalized the inputs and outputs to zero mean and unit variance prior training.

For GPSSM models, we perform 3,000 iterations while for MR/MC-GPSSM, we perform $10$ backfitting loops with $300$ iterations for each component. Table~\ref{table:emission} shows a comparison between the different methods. 
Furthermore, we studied the sensitivity of the results with respect to the chosen resolution set in Supplementary Table~\ref{table:sensitivity}. 

\begin{figure*}[h]
    \centering
    \includegraphics[width=\textwidth]{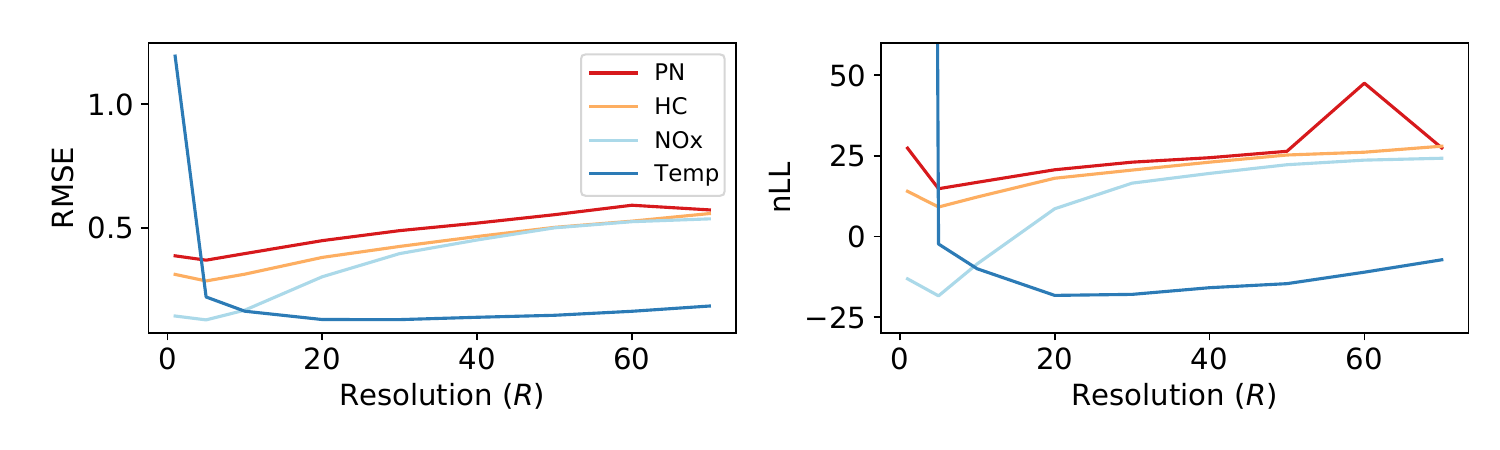}
    \caption{\textbf{Impact of resolution on predictive performance on the engine dataset.
    }
    Shown is the predictive performance (left: RMSE, right: nLL) of GPSSM when varying the resolution $R$ on an independent test set.
    The output Temp requires large resolutions, indicating that the dynamics are slowly varying.
    All other outputs require smaller resolutions, pointing towards fast varying dynamics.
    }
    \label{figapp:gridsearch}
\end{figure*}

\begin{landscape}
\begin{table*}[h]
\centering
\caption{\textbf{Experimental Results on Engine Modeling Task.} Predictive performance of GPSSM and MR/MC-GPSSM and GPSSM emission datasets with four outputs. We report the mean (standard error) over $5$ runs. The best performing method is marked in bold.}

\begin{tabular}{l|l|lllllllll} 
\hline
\multicolumn{1}{l}{}            & \multicolumn{1}{l}{} & \multicolumn{3}{c}{\textbf{GPSSM~}}                             &  & \multicolumn{3}{c}{\textbf{MC-GPSSM }}          &  & \multicolumn{1}{c}{\textbf{MR-GPSSM}}  \\ 
\cline{3-5}\cline{7-9}\cline{11-11}
\multicolumn{1}{l}{}            & \multicolumn{1}{l}{} & $R=1$         & $R=5$                  & $R=30$                 &  & $R=[30,30,30]$ & $R=[5,5,5]$   & $R=[1,1,1]$    &  & $R=[30,5,1]$                           \\ 
\hline
\multirow{4}{*}{\rotcell{RMSE}} & \textbf{PN}          & 0.39 (0.02)   & \textbf{0.37 (0.03)}   & 0.49 (0.03)            &  & 0.48 (0.02)    & 0.40 (0.02)   & 0.42 (0.01)    &  & 0.41 (0.02)                            \\
                                & \textbf{HC}          & 0.31 (0.02)   & \textbf{0.28 (0.01)}   & 0.42 (0.02)            &  & 0.43 (0.03)    & 0.30 (0.02)   & 0.56 (0.12)    &  & 0.32 (0.02)                            \\
                                & \textbf{NOx}         & 0.14 (0.01)   & \textbf{0.13 (0.00)}   & 0.40 (0.01)            &  & 0.41 (0.01)    & 0.14 (0.01)   & 0.19 (0.02)    &  & 0.14 (0.01)                            \\
                                & \textbf{Temp}        & 1.19 (0.21)   & 0.22 (0.00)            & 0.13 (0.00)            &  & 0.12 (0.00)    & 0.21 (0.01)   & 1.16 (0.09)    &  & \textbf{0.11 (0.01)}                   \\ 
\hline
\multirow{4}{*}{\rotcell{nLL}}  & \textbf{PN}          & 27.36(6.92)   & \textbf{14.78 (1.42)}  & 23.02 (1.19)           &  & 22.45 (1.05)   & 16.24 (1.45)  & 26.74 (7.38)   &  & 17.58 (1.00)                           \\
                                & \textbf{HC}          & 13.97 (3.24)  & \textbf{9.12 (0.92)}   & 20.56 (1.19)           &  & 20.53 (1.98)   & 9.92 (1.63)   & 77.05 (36.37)  &  & 11.49 (2.28)                           \\
                                & \textbf{NOx}         & -13.17 (1.99) & \textbf{-18.45 (1.01)} & 16.51 (0.77)           &  & 16.10 (0.68)   & -13.78 (1.88) & 15.52 (15.16)  &  & -14.30 (1.95)                          \\
                                & \textbf{Temp}        & $>1000$       & -2.37 (0.60)           & \textbf{-17.99 (0.53)} &  & -13.64 (1.44)  & -2.11 (1.14)  & 352.99 (33.81) &  & -17.39 (7.15)                          \\
\hline
\end{tabular}
\label{table:emission}
\end{table*}

\begin{table}[h]
\centering
\caption{\textbf{Sensitivity analysis.}
Predictive performance of MR-GPSSM on emission datasets when varying the resolution set. 
The first column $R=[30,5,1]$ corresponds to the default setting.
We can observe that the performance is consistent when varying the size of the resolution set.
The outputs HC and NOx require at least one component that can capture fast dynamics, while the output Temp requires at least one component that can capture slow dynamics.
We report the mean (standard error) over $5$ runs.}
\begin{tabular}{l|l|llllll} 
\hline
\multicolumn{1}{l}{}  & \multicolumn{1}{l}{} & \multicolumn{6}{c}{\textbf{MR-GPSSM}}                                                                           \\ 
\cline{3-8}
\multicolumn{1}{l}{}  & \multicolumn{1}{l}{} & $R=[30,5,1]$  & $R=[7,5,1]$          & $R=[40,20,10]$ & $R=[40,20]$   & $R=[30,15,7,5,1]$ & $R=[40,20,10,5,1]$  \\ 
\hline
\multirow{4}{*}{RMSE} & \textbf{PN}          & 0.41 (0.02)   & 0.40 (0.03)          & 0.41 (0.02)    & 0.45 (0.02)   & 0.40 (0.02)       & 0.39 (0.02)         \\
                      & \textbf{HC}          & 0.32 (0.02)   & 0.29 (0.01)          & 0.38 (0.03)    & 0.40 (0.02)   & 0.31 (0.02)       & 0.32 (0.01)         \\
                      & \textbf{NOx}         & 0.14 (0.01)   & 0.15 ({0.01}) & 0.19 (0.01)    & 0.33 (0.01)   & 0.15 (0.01)       & 0.17 (0.02)         \\
                      & \textbf{Temp}        & 0.11 (0.01)   & 0.22 (0.03)          & 0.10 (0.00)    & 0.11 (0.00)   & 0.10 (0.00)       & 0.10 (0.00)         \\ 
\hline
\multirow{4}{*}{nLL}  & \textbf{PN}          & 17.58 (1.00)  & 20.34 (5.99)         & 17.83 (1.15)   & 20.77 (1.01)  & 16.65 (1.05)      & 16.00 (1.02)        \\
                      & \textbf{HC}          & 11.49 (2.28)  & 10.00 (0.81)         & 16.90 (2.04)   & 18.99 (1.35)  & 12.99 (3.77)      & 13.02 (1.79)        \\
                      & \textbf{NOx}         & -14.3 (1.95)  & -11.79 (1.84)        & -5.91 (0.68)   & 9.72 (0.54)   & -11.76 (2.94)     & 12.39 (22.55)       \\
                      & \textbf{Temp}        & -17.39 (7.15) & 3.67 (10.05)         & -21.94 (2.13)  & -16.69 (0.82) & -26.37 (1.37)     & -26.23 (0.76)       \\
\hline
\end{tabular}
\label{table:sensitivity}
\end{table}
\end{landscape}

\end{document}